\newtheorem{theorem}{Proposition}
\def\ps@IEEEtitlepagestyle{%
  \def\@oddfoot{\mycopyrightnotice}%
  \def\@evenfoot{}%
}
\def\mycopyrightnotice{%
  {}
}
\begin{document}

\title{A Multi-In and Multi-Out Dendritic Neuron Model and its Optimization}

\author{Yu Ding,
        Jun Yu,
        Chunzhi Gu,
        Shangce Gao,
        and~Chao~Zhang
\thanks{Corresponding Author: Chao Zhang.}
\thanks{Y. Ding, C. Gu and C. Zhang* are with the School of Engineering, University of Fukui, Fukui, Japan (e-mails: dyd22803@u-fukui.ac.jp;gu.univ.work@gmail.com; zhang@u-fukui.ac.jp).}
\thanks{J. Yu is with the Institute of Science and Technology, Niigata University, Niigata, Japan (e-mails: yujun@ie.niigata-u.ac.jp).}
\thanks{S. Gao is with the Faculty of Engineering, University of Toyama, Toyama, Japan (e-mails: gaosc@eng.u-toyama.ac.jp).}
}



\maketitle

\begin{abstract}
Artificial neural networks (ANNs), inspired by the interconnection of real neurons, have achieved unprecedented success in various fields such as computer vision and natural language processing. Recently, a novel mathematical ANN model, known as the dendritic neuron model (DNM), has been proposed to address nonlinear problems by more accurately reflecting the structure of real neurons. However, the single-output design limits its capability to handle multi-output tasks, significantly lowering its applications. In this paper, we propose a novel multi-in and multi-out dendritic neuron model (MODN) to tackle multi-output tasks. Our core idea is to introduce a filtering matrix to the soma layer to adaptively select the desired dendrites to regress each output. Because such a matrix is designed to be learnable, MODN can explore the relationship between each dendrite and output to provide a better solution to downstream tasks. We also model a telodendron layer into MODN to simulate better the real neuron behavior. Importantly, MODN is a more general and unified framework that can be naturally specialized as the DNM by customizing the filtering matrix.
To explore the optimization of MODN, we investigate both heuristic and gradient-based optimizers and introduce a 2-step training method for MODN. Extensive experimental results performed on 11 datasets on both binary and multi-class classification tasks demonstrate the effectiveness of MODN, with respect to accuracy, convergence, and generality. 
\end{abstract}

\begin{IEEEkeywords}
Dendritic neuron model (DNM), multi-output model, neural network, machine learning, classification.
\end{IEEEkeywords}

\section{Introduction}\label{intro}
\IEEEPARstart{N}{eurons} perform essential neurological functions in living organisms such that they can perceive stimuli, integrate information, and conduct impulses. It is composed of somas and synapses, which serve as the structural and functional units of the neural tissue. Specifically, when a neuron receives chemicals released by other neurons, a shift of potential difference will be induced between the inside and outside of the neuron. Once this potential difference exceeds a threshold, the receiving neuron becomes excited and then releases chemicals to communicate with neighboring neurons. This process is known as nerve impulse transmission. In 1943, McCulloch and Pitts conceptualized this biological process and constructed the McCulloch-Pitts neuron, which remains one of the most widely recognized neuron model at present \cite{mcculloch1943logical}. Subsequently, Frank \cite{rosenblatt1961principles} connected McCulloch-Pitts neurons in a fully connected manner in multiple layers. This model, termed the multi-layer perceptron (MLP), is currently the most popular artificial neural network. Despite its conceptual novelty, as pointed out in \cite{marvin1969perceptrons}, this model suffered from two major deficiencies in 1969: (i) a successful training for an MLP is nearly impossible because of its complicated structure; (ii) even if such a multi-layer manner could resolve nonlinear problems, the information processing efficiency of each single-layer perceptron is largely diminished compared to that of the real neuron. 


Later, backpropagation (BP) algorithm was proposed as an effective training method for MLP \cite{rumelhart1986learning} and addressed the above limitation (i). This directly results in broad attention \cite{hornik1989multilayer, gallant1990perceptron, bourlard1990links} to further study of MLP. For example, Hornik et al. \cite{hornik1989multilayer} demonstrated that if the network size is not limited, an MLP with deep structures can approximate any Borel measurable function with arbitrary squashing functions as the activation. Consequently, artificial neural networks (ANNs) have contributed to significant progress in various fields such as image processing \cite{krizhevsky2017imagenet}, natural language processing \cite{vaswani2017attention}, and robotics \cite{he2016adaptive}.

Meanwhile, other efforts have been made to improve the processing strength of single-neuron models. These models attempt to simulate the transmission process in a single neuron \cite{koch2000role, aguera2000can, costa2011one}, such as
pi–sigma models \cite{shin1991pi, ghosh1992efficient, fan2022convergence}, sigma-pi models \cite{gurney1992training, lenze1994make, heywood1995framework, neville2002transformations, weber2007self, long2007lp}, and a single multiplicative neuron model \cite{yadav2007time, zhao2009pso, bas2016robust, kolay2023new}. Most of these models incorporated synapse-like architectures to solve linear and nonlinear problems. Specifically, nonlinearity is modeled as a single multiplication for implementation simplicity. However, the limited architectures do not consider the biological information contained in other key components of the nervous system, such as dendrites. Although later work \cite{legenstein2011branch, bono2017modelling} integrates synapses and dendritic branching in a unified model, handling the exclusive OR (XOR) with single nonlinear branch neurons remains challenging.

To provide the capability for ANNs in modeling the nonlinearity between synaptic inputs and outputs, Todo et al. proposed a dendritic neuron model (DNM) that formulates a single multiplicative neuron model \cite{todo2014unsupervised}. In particular, DNM employs a series of sigmoids to emulate the nonlinear synapse behavior and includes dendrites and the cell body to mimic the architecture of real neurons. Their later study also verified that the DNM can implement the XOR operation \cite{todo2019neurons}. More recently, Gidon et al. \cite{gidon2020dendritic} discovered a calcium-mediated dendritic potential that proves that one single real neuron can tackle XOR, which further evidences the design feasibility of DNM. Hence, DNM has been widely researched to provide better solutions to real-world tasks \cite{ji2016approximate, gao2019dendritic, todo2019neurons, he2021seasonal, xu2021dendritic, luo2021decision, gao2021fully, yu2022improving, ji2022competitive, peng2023extension}.


DNM, however, still suffers from a major issue regarding the structural limitation: since a single DNM is designed to have one single output, it cannot handle multi-output tasks. This significantly narrows the applications as the majority of real-world tasks involve multiple outputs. Two main approaches can be easily achieved to extend ANNs to resolve this problem. First, one can introduce an additional layer (e.g., sigma-pi \cite{gurney1992training}) to allow multiple outputs. However, this can cause the learning procedure to involve undesired information, because each output is regressed from the entire set of inputs. Alternatively, stacking multiple artificial neurons can yield multiple outputs. However, this completely ignores the inherent correlation among these outputs. Moreover, it should be noted that one single biological neuron branches to other cells at the end of the axon to realize ``multiple outputs'', which differs greatly from the stacking approach. We thus argue that enabling DNM to perform multi-out tasks requires an elaborate modeling strategy that jointly learns the relationship among all outputs and shows awareness of real nervous architectures.

In this paper, we propose a multi-output dendritic neuron model (MODN), which is a novel framework for handling multi-output tasks. Our key idea is to introduce a selection strategy such that MODN learns to adaptively select dendrites to regress each output. To achieve this, we propose a learnable Boolean filtering matrix and integrate it into the soma layer to characterize the preference tendency between each dendrite and output. MODN is a unified framework that can be readily specialized as the DNM by specifying the filtering matrix. We also design a telodendron layer to better model neuron firing to allow a more faithful representation of real neuron architectures. To address the optimization of the MODN, we examine gradient-based and heuristic algorithms, and propose a two-step training method to learn the filtering matrix. We perform extensive experiments on 11 datasets regarding both binary and multi-class classification tasks to evaluate the effectiveness of MODN. Furthermore, we investigate two instances of MODN, which can be treated as DNM extensions, and demonstrate the superiority of MODN in handling multi-output classification tasks. Our contributions can be summarized as follows:

\begin{itemize}
	\item We propose a novel framework that enables  dendritic neuron models to handle both binary and multi-output tasks in a unified mathematical formulation. 
 	\item We introduce a filtering matrix to adaptively select the dendrites for the downstream tasks.
   	\item We experimentally investigate the optimization strategies for our model with gradient-based and heuristic algorithms, and propose a two-step learning method for MODN.  
        \item We report extensive experimental results against prior dendritic neuron model on classification tasks using 11 datasets to evaluate the effectiveness of our model.
\end{itemize}

The remainder of this paper is organized as follows. We review the architecture of classic DNM in Sec. \ref{relate}. We then describe MODN in detail as well as the optimization for it in Sec. \ref{method}. Next, we report experimental results against DNM in Sec. \ref{experiments}. Sec. \ref{con} concludes our paper. 



\section{Preliminaries}
\label{relate}
In this section, we give an introduction to the fundamental dendritic neuron model (DNM) \cite{todo2014unsupervised, gao2019dendritic} since it is the most related work to ours. DNM is a type of artificial neuron model proposed to emulate the neural transmission process in real neurons. It is constituted by four layers: a synaptic layer, a dendrite layer, a membrane layer, and a soma layer. The synaptic and soma layers employ nonlinear activation functions to construct single-input and single-output nodes, respectively. Therefore, DNM is only capable of handling single-out tasks. The multiplicative and additive rules are applied to the dendrite and membrane layers to model the AND and OR functions, respectively.

\noindent \textbf{Synaptic Layer.} The synaptic layer is the initial node for the transmission process in a neuron that directly receives the input signal. It consists of several synapses to simultaneously tackle multiple inputs. A synapse includes presynaptic membrane, synaptic cleft, and postsynaptic membrane. In the nervous system, the presynaptic membrane in the previous neuron emits a chemical, called a neurotransmitter, to the postsynaptic membrane in the next neuron to form an electrical impulse for information transformation. Typically, the electrical impulse can either inhibit or excite the state of the next neuron, depending on the type of neurotransmitter. DNM leverages a sigmoid function to mathematically model such a physiological process. Assume the sample set to be $\boldsymbol{X}=[X^{1},...,X^{n},...,X^{N}]\in{R^{D\times{N}}}$ with $D$ dimensions. Assume further that the $n$-th arbitrary sample is $X^{n}=[x_{1}^{n},...,x_{d}^{n},...,x_{D}^{n}]^{T}$. Then, the nonlinear mapping of the synaptic layer to the $n$-th sample (the superscript $n$ is omitted for better readability in all the following equations and figures) is given by
\begin{equation}
\label{eq:eq1}
Y_{ij}=\frac{1}{1+e^{-\alpha^{s} (\omega^{s}_{ij}x_{i}-\theta^{s}_{ij})}},
\end{equation}
where $x_{i}$ is the $i$-th $(i=1,...,D)$ input to the dendrite, and $Y_{ij}$ is the output of the $i$-th synaptic node on the $j$-th $(j=1,...,M)$ dendrite. The learnable parameter pair ($\omega^{s}_{ij}$, $\theta^{s}_{ij}$), which respectively denotes the weights and thresholds, jointly controls the synaptic states in connecting two neurons. $\alpha^{s}$, termed a synaptic scaling factor, is a positive hyperparameter that indicates the scaling strength. Eq. \ref{eq:eq1} computes the output of $i$-th synapse on the $j$-th dendrite. Synaptic nodes on dendrites respond accordingly to the input signal through these learnable parameters. Specifically, synaptic states can be divided into four types, including the \textit{excitatory state}, the \textit{inhibitory state}, the \textit{unvarying 0 state}, and the \textit{unvarying 1 state}.

\begin{itemize}
        \item \textit{Excitatory state.} The input signal has a positive correlation with the output signal at this state node.
        This state is activated when both $\omega^{s}_{ij}$ and $\theta^{s}_{ij}$ are greater than 0 and $\omega^{s}_{ij}$ is greater than $\theta^{s}_{ij}$.
        \item \textit{Inhibitory state.} The input signal has an inverse correlation with the output signal at this state node.
        This state is activated when both $\omega^{s}_{ij}$ and $\theta^{s}_{ij}$ are smaller than 0 and $\omega^{s}_{ij}$ is smaller than $\theta^{s}_{ij}$.
        \item \textit{Unvarying 0 state.} In this synaptic state, the output is 0 for any input between 0 and 1. This state is activated when both $\omega^{s}_{ij}$ and $\theta^{s}_{ij}$ are greater than 0 and $\theta^{s}_{ij}$ is greater than $\omega^{s}_{ij}$, or $\theta^{s}_{ij}$ is greater than 0 but $\omega^{s}_{ij}$ is smaller than 0.
        \item \textit{Unvarying 1 state.} In this synaptic state, the output is 1 for any input between 0 and 1. This state is activated when both $\omega^{s}_{ij}$ and $\theta^{s}_{ij}$ are smaller than 0 and $\theta^{s}_{ij}$ is smaller than $\omega^{s}_{ij}$, or $\theta^{s}_{ij}$ is smaller than 0 but $\omega^{s}_{ij}$ is greater than 0.
\end{itemize}

\noindent \textbf{Dendrite Layer.} The dendrite layer integrates excitatory and inhibitory outputs from the synaptic layer. 
In modeling the dendrite layer, formulating the complex dendritic structure (e.g., rich ion channels) is mathematically intractable. DNM deals with this issue by leveraging a simple multiplicative process to implement the AND function among synapses for non-linearity, which is given by 
\begin{equation}
\label{eq:eq2}
Z_{j}=\prod_{i=1}^{D}Y_{ij},
\end{equation}
where $Z_{j}$ refers to the output of the $j$-th dendrite.


\noindent \textbf{Membrane Layer.} The membrane layer fuses the entire outputs of the dendrite layer. DNM utilizes the OR operation to fulfill this process, following 
\begin{equation}
\label{eq:eq3}
V=\sum_{j=1}^{M}Z_{j},
\end{equation}
where $V$ represents the output of the membrane layer.

\noindent \textbf{Soma Layer.} Receiving the output of the membrane layer, the soma layer performs neuron firing to form the final output. This process is modeled in DNM as a sigmoid activation, which is formulated as 
\begin{equation}
\label{eq:eq4}
O=\frac{1}{1+e^{-\alpha^{o}(v-\theta^{o})}}, 
\end{equation}
where $O$ represents the final output of the neuron. Similar to the synaptic layer, $\alpha^{o}$ and $\theta^{o}$ denote the two positive scaling hyperparameters that control the activation strength, respectively.

In general, DNM is inspired more by the mechanism of biological neuron cells and 
proves efficient in handling nonlinearity of versatile downstream tasks by modifying the training objective \cite{todo2014unsupervised, gao2019dendritic}. However, it suffers from a major limitation that the soma layer in DNM does not enable multiple outputs, which significantly limits its application. Furthermore, a straightforward extension, such as stacking multiple DNMs, to allow multi-out capability would always confine each dendrite to explore a single input attribute. This would also induce the loss of the essential nature of being a single neuron. We next present MODN to realize effective multi-out learning with a single neuron model.




\section{Method}\label{method}
Let us now detail our method $-$ multi-output dendritic neuron model (MODN), which is proposed to specifically handle tasks with multiple outputs. As illustrated in Fig. \ref{fig:1}, MODN consists of four layers: 1) a synaptic layer; 2) dendrite layer; 3) soma layer; 4) and telodendron layer. Given a $D$-dimensional input sample $x^n = [x_1, \cdots, x_D]$ on each dendrite (e.g., $m$-th dendrite), the synaptic layer first performs dimensional-wise nonlinear transformation to project it into a feature vector $Y_m = [Y_{1m}, \cdots, Y_{Dm}]$. This transformation is performed on all $M$ dendrites to obtain the feature vector set $\mathbf{Y} = [Y_1, \cdots, Y_M]$. For each $Y_m$, the dendrite layer conducts element-wise multiplication for feature fusion. We adopt the design of synaptic and dendrite layers as in \cite{todo2014unsupervised, gao2019dendritic} for MODN. The fused feature set $\mathbf{Z} = [Z_1, \cdots, Z_M]$, which is obtained from all dendrites, is then fed into the soma layer for further feature composition. In designing the soma layer, we propose a novel filtering strategy by introducing a preference matrix to enable an adaptive learning of specific features among the samples. Such features then pass through the telodendron layer, which is the first time modeled in DNMs, to eventually regress the output. MODN is a unified framework for DNMs, naturally including prior DNM \cite{gao2019dendritic} as a special case by customizing the preference matrix. 

\subsection{Multi-output dendritic neuron model}

\begin{figure*}[htb]
\begin{center}
\includegraphics[width=1.0\linewidth]{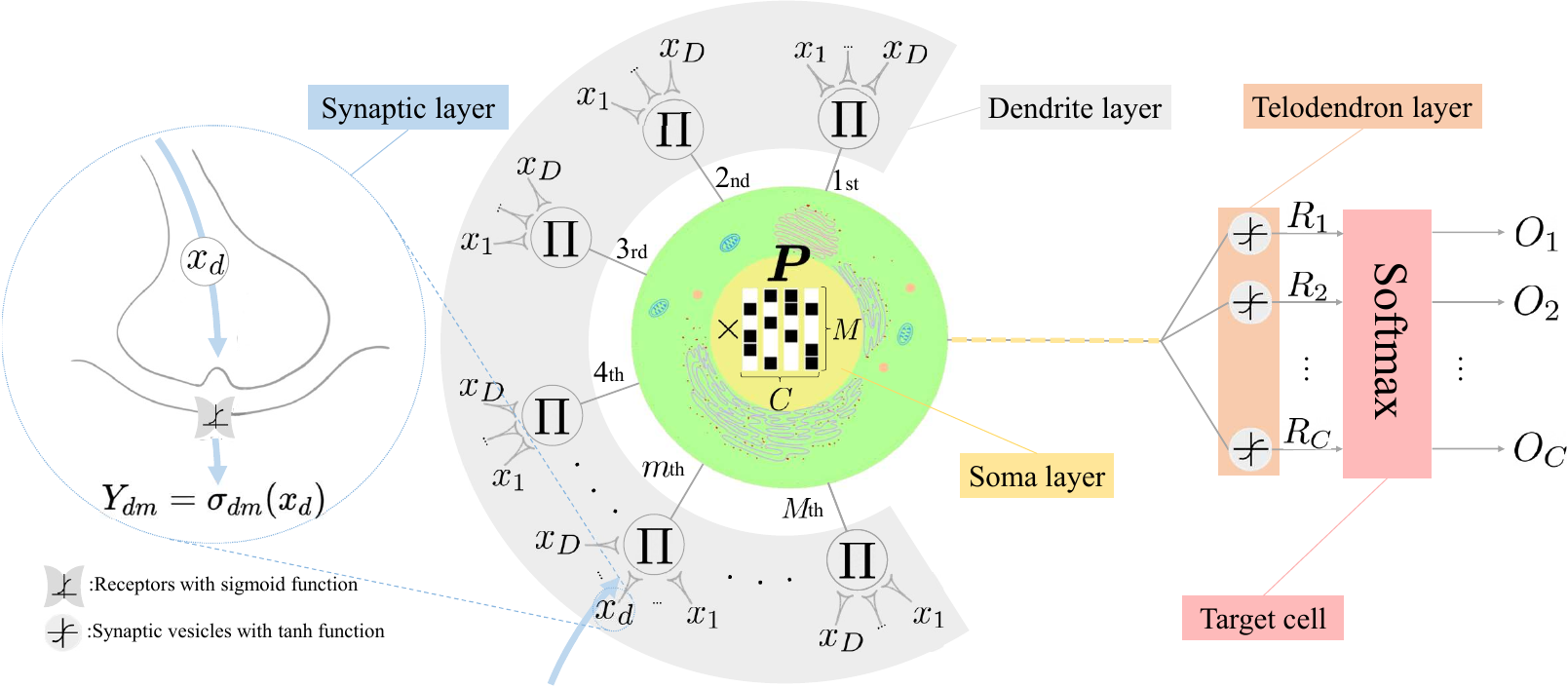}
\end{center}
\caption{Overview of the proposed multi-output dendritic neuron (MODN). The input sample $X$ sequentially passes through the synaptic layer, dendrite layer, soma layer, and telodendron layer to connect to the target cell for the final output $O$. $\sigma(\cdot)$ denotes Eq. \ref{eq:eq1}. The downstream task is assumed to be the multi-class classification as an example.}
\label{fig:1}
\end{figure*}

\noindent \textbf{Preference filter.} In \cite{todo2014unsupervised}, only the addition operation with all outputs of the dendrite layer is performed to form the soma layer feature. However, uniformly treating all the dendrite layer outputs in a naive fashion would, by contrast, hinder the network (i.e., the neuron) from effectively learning the key features required for some specific outputs. This can deprive the ability of the model to select beneficial dendrites when the input attribute varies. We thus argue that the model should be capable of adaptively choosing the desired dendrites to handle the inherent attribute difference to promote better feature selection. To this end, 
we propose a filtering strategy by introducing a Boolean matrix $\boldsymbol{P}\in{\{0,1\}}^{M\times C}$ to selectively learn the key features. Given $Z=[Z_{1},...,Z_{m},...,Z_{M}]$ as the input to the soma layer (i.e., the output of the dendritic layer), the filtering is performed by 
\begin{equation}
\label{eq:eq5}
V=Z{\boldsymbol{P}}
\end{equation}
to output the soma layer feature $V=[V_{1},...,V_{c},...,V_{C}]\in \mathbbm{R}^{1\times C}$. In our design, we make $\boldsymbol{P}$ learnable to adaptively learn the beneficial features for downstream tasks. Particularly, the rows and columns of $\boldsymbol{P}$ correspond to all the dendrites and soma layer outputs. By learning $\boldsymbol{P}$, each column of $\boldsymbol{P}$ is endowed with the role in guiding the dendrites to produce the fused features for one specific soma layer output. As such, the final combined features are disentangled to mitigate learning overlapped feature on each dendrite. 

\noindent \textbf{Dendrites state.} The introduction of the filtering matrix $\boldsymbol{P}$ during learning directly causes the arbitrary dendrite to exhibit different states. In particular, the following three dendrite states can be discussed: 
\begin{itemize}
        \item \textit{Inoperative state.} This state is activated when all the elements of in a row of $\boldsymbol{P}$ are equal to 0. Here, the associated dendrite does not influence the neuron since the signal generated by the dendrite corresponding to the column does not propagate forward. 
        \item \textit{Exclusive state.} This state is activated when one of the rows in the matrix $\boldsymbol{P}$ is an one-hot vector. It indicates that the dendrite in connection with this row only captures the exclusive attributes corresponding to the output of this column for downstream tasks.
        \item \textit{Communal state.} This state is activated when a row in the matrix $\boldsymbol{P}$ contains more than one element of 1. Here, the dendrite corresponding to this row is able to learn the joint features for the outputs with 1-valued elements in $\boldsymbol{P}$ within the same row.
\end{itemize}
Among these states, the \textit{Exclusive state} and the \textit{Communal state} indicate that model learning is progressing. However, in the case of the \textit{Inoperative state}, since the inoperative dendrites would trigger the halt in the neural transmission process, including this state of dendrite does not have any positive effect on the downstream task (e.g., classification). Instead, it induces increasing computational overhead. We thus empirically tune an appropriate dendrite number $M$ to hopefully exclude the inoperative dendrite. Consequently, we only include the \textit{Exclusive state} and the \textit{Communal state} in our model for encoding efficiency.

\noindent \textbf{Two instances of MODN.}
Because MODN is heavily dependent on $\boldsymbol{P}$, modifying the elements in $\boldsymbol{P}$ results in diverse modes of MODN. In addition to our learnable design, manual specification of $\boldsymbol{P}$ can also be conducted in advance, which instantiates the model to two special cases. To show the superiority of automatic optimization of $\boldsymbol{P}$, these two instances will also be compared in the experiment (Sec. \ref{evaluation}).

\begin{itemize}
        \item \textit{MODNP.} The MODN can be customized to learn no correlations between outputs. As illustrated in Fig. \ref{fig:2}(a), this example degrades MODN to the case of stacking multiple single dendritic neural models (i.e., DNMs) by discarding such output correlations. In this case, the model is equivalent to Fig. \ref{fig:2}(b) because each dendrite group (i.e., gray, blue, and orange DNMs) works independently during learning. To achieve this, $\boldsymbol{P}$ can be designed as 
        \begin{equation}
        \label{eq:eq6}
        P_{ij}=\mathbbm{1}(\lceil i(C/M) \rceil=j), C \; \text{mod} \; M = 0,
        \end{equation}
        where $P_{ij}$ denotes each element within $\boldsymbol{P}$. $\mathbbm{1}(\cdot)$ and $\lceil \cdot \rceil$ refer to the indicator and the ceiling function, respectively.
        As such, each independent group of dendrites in MODNP can be regarded as the dendrites of a single DNM model since they are portioned into groups. We thus regard this case as an MODN instance, and term it as MODN partition (MODNP).
        
        \item \textit{MODNF.} In addition, MODN can be configured to show complete awareness of the relationships between all dendrites and all outputs. This can be realized by simply setting  $\boldsymbol{P}$ as an all-ones matrix, i.e., 
        \begin{equation}
        \label{eq:eq7}
        P_{ij}=1, i=1,\cdots,M, j=1,\cdots,C
        \end{equation}
        As such, the attributes relevant to all outputs are allowed to be jointly characterized by each dendrite. We regard this case as another MODN instance, and term it as MODN-full (MODNF). It can be viewed further as a special case of \textit{Communal state}. 
\end{itemize}
Since MODN is aimed at addressing multi-output tasks, the above two instances represent the two most straightforward extensions that one would naively and intuitively devise from DNM. However, even if MODN naturally includes these two instances, enforcing them by constraining $\boldsymbol{P}$ can hinder the performance because they involve different learning constraints by design. By contrast, MODN can by nature explore a better effectiveness to grasp the intrinsic attributes of individual output, and the mutual dependencies among the outputs. As will be shown in our experiments (Sec. \ref{experiments}), learning such a preference filter, instead of pre-determining it, would generally contribute to a performance boost.

\begin{figure}[]
\centering
  \includegraphics[width=1\linewidth]{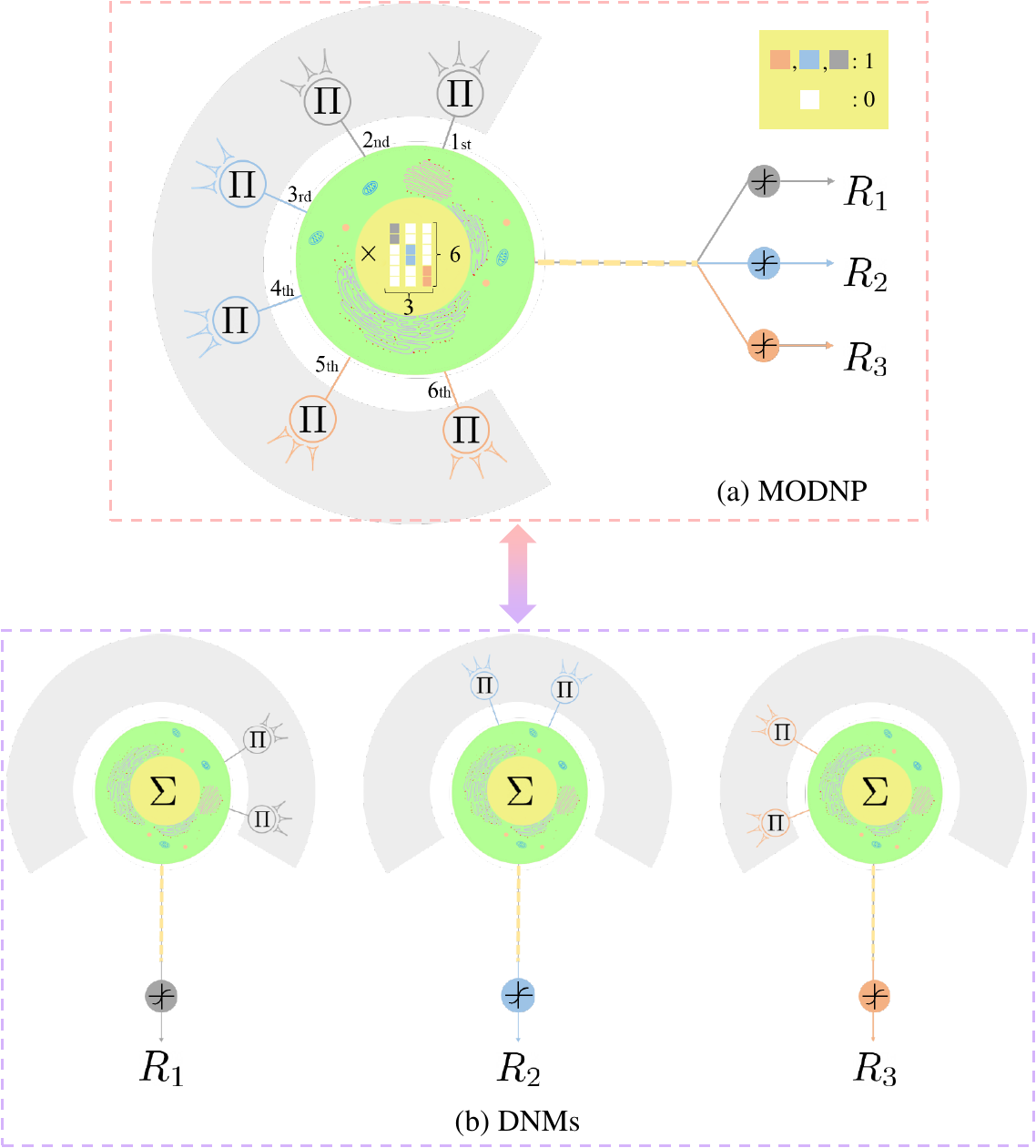}
   \caption{Illustration of equivalence between (a) MODNP and (b) Stacked DNMs. The number of dendrites $M$ and output dimension $C$ are set to 6 and 3 as an example, respectively. Dendrites colored the same belong to the same dendrite group.} 
\label{fig:2}
\end{figure}



\noindent \textbf{Telodendron layer.} In the nervous systems, axons, which are the bridges of multiple neurons (or target cells), generally branch out to form telodendria to stimulate neuron firing. Motivated by this, we introduce a telodendron layer prior to the output regression to encourage the nonlinearity of MODN. Specifically, our telodendron layer is structured with a tanh activation to map the soma layer output $V$ to the range [-1, 1], following
\begin{equation}
\label{eq:eq8}
R_{c}=\frac{e^{\alpha^{t}(\omega^{t}_{c}V_{c}-\theta^{t}_{c})}-e^{-\alpha^{t}(\omega^{t}_{c}V_{c}-\theta^{t}_{c})}}{e^{\alpha^{t}(\omega^{t}_{c}V_{c}-\theta^{t}_{c})}+e^{-\alpha^{t}(\omega^{t}_{c}V_{c}-\theta^{t}_{c})}},
\end{equation}
where $R_c$ $(c=1,...,C)$ refers to the $c$-th telodendron output. $\alpha^{t}$ denotes the telodendron scaling factor, which is similar to that of the synaptic layer. To enable the flexibility of MODN in controlling the activation (i.e., neuron fire), we introduce learnable weights $\omega^{t}_{c}$ and thresholds $\theta^{t}_{c}$ to from the $c$-th telodendron. Note that MODN differs from DNM in its activation design regarding: (1) the additional telodendron layer introduced in MODN is more biological-oriented rather than simply stacking multiple DNMs in handling the multi-output tasks; (2) DNM simply relies on a fixed sigmoid activation, which would induce huge labor efforts in fine-tuning the optimal thresholds. 

\noindent \textbf{Target cell.} 
The telodendron layer output would pass into next cells to enable performing biological roles, which corresponds to different designs of the target cell. In MODN, different designs of the target cell can be tailored to different downstream tasks. For example, in a multi-class classification task, the target cell can be formed with a softmax operation for normalizing the output. Thus, our final model output to derive the classification results can be expressed as  
\begin{equation}
\label{eq:eq9}
O_{c}=\frac{e^{R_{c}}}{\sum_{c=1}^{C}e^{R_{c}}},
\end{equation}
where $O_{c}$ is the probability that the input sample belongs to the $c$-th class. Given a training set with $N$ samples, the supervision can be enforced by employing the (binary) cross-entropy, following 
\begin{equation}
\label{eq:eq10}
\mathcal{L}=-\frac{1}{N}\sum_{n=1}^{N}\ell^{n}=
-\frac{1}{N}\sum_{n=1}^{N}\sum_{c=1}^{C}\widehat{O}_{c}^{n}\log(O_{c}^{n}),
\end{equation}
where $\widehat{\cdot}$ is the corresponding ground truth. $\ell^{n}$ is the loss of $n$-th sample. 

Up to this point, we have detailed our proposed multi-out dendritic neuron model and its two instances. We next need to determine the optimization algorithm for learning.

\subsection{Optimization}
A prevalent selection for optimizing the network such as deep neural network is BP. However, the fact that $\boldsymbol{P}$ is discrete in its solution space makes our model less feasible to directly take partial derivative and propagate the gradient. Hence, learning MODN requires other gradient-free techniques. To address this, we resort to heuristic algorithms to explore decent solutions within acceptable processing time. Particularly, we employ five types of heuristic algorithms to examine the performance: biogeography-based optimization (BBO) \cite{simon2008biogeography}, genetic algorithm (GA) \cite{simon2013evolutionary}, particle swarm optimization (PSO) \cite{eberhart1995new}, population-based incremental learning (PBIL) \cite{baluja1994population}, and evolutionary strategy (ES) \cite{engelbrecht2007computational}. 

To be more specific, training MODN involves learning the following two components: 
(a) the model parameters $\omega^{s}_{ij}, \theta^{s}_{ij}, \omega^{t}_{c}$, $\theta^{t}_{c}$, and (b) the preference filter $\boldsymbol{P}$. During training, we notice that jointly learning (a) as well as (b) would complicate the optimization for both parts. The reason is that, since the optimization for  $\boldsymbol{P}$ would cause its elements to flip (i.e., 1 to 0 or 1 to 0), the multiplication operation (Eq. \ref{eq:eq5}) of $\boldsymbol{P}$ and the signals can activate and deactivate the learned parameters as the training progresses. To alleviate the resulting negative influence and ease training, we follow a 2-step optimization scheme by first learning (b) yet freezing (a), and then updating (a) yet freezing (b) alternately. 



For MODNP and MODNF, we simply utilize BP since $\boldsymbol{P}$ is pre-determined and does not require any update. By recursively using the results in the previous round for $R$ iterations, the $(r+1)$-th $(r=1,...,R)$ round of update for $\omega^{s}_{ij}, \theta^{s}_{ij}, \omega^{t}_{c}, \theta^{t}_{c}$ can be formulated as: 
\begin{equation}
\label{eq:eq11}
w^{s}_{ij(r+1)}=w^{s}_{ij(r)}+\eta  \frac{1}{N}\sum^{N}_{n=1}\frac{\partial\ell^{n}}{\partial w^{s}_{ij(r)}},
\end{equation}
\begin{equation}
\label{eq:eq12}
\theta^{s}_{ij(r+1)}=\theta^{s}_{ij(r)}+\eta  \frac{1}{N}\sum^{N}_{n=1}\frac{\partial\ell^{n}}{\partial \theta^{s}_{ij(r)}},
\end{equation}
\begin{equation}
\label{eq:eq13}
w^{t}_{c(r+1)}=w^{t}_{c(r)}+\eta  \frac{1}{N}\sum^{N}_{n=1}\frac{\partial\ell^{n}}{\partial w^{t}_{c(r)}},
\end{equation}
\begin{equation}
\label{eq:eq14}
\theta^{t}_{c(r+1)}=\theta^{t}_{c(r)}+\eta  \frac{1}{N}\sum^{N}_{n=1}\frac{\partial\ell^{n}}{\partial \theta^{t}_{c(r)}},
\end{equation}
where $\eta$ denotes the learning rate. In particular, the partial derivatives of $\ell^{n}$ with respect to $w^{t}_{c}$ and $\theta^{t}_{c}$ in Eqs. \ref{eq:eq13} and \ref{eq:eq14} (the superscript $n$ is omitted for brevity in all the following equations) can be respectively derived as:
\begin{equation}
\begin{aligned}
\label{eq:eq15}
\frac{\partial\ell}{\partial w^{t}_{c(r)}}&= \frac{\partial\ell}{\partial O_c}\frac{\partial O_c}{\partial R_c}\frac{\partial R_c}{\partial w^{t}_{c}}\\
&=\widehat{O}_{c}\alpha^{t}V_{c}(1-O_{c})(1-R_c^2),
\end{aligned}
\end{equation}
\begin{equation}
\begin{aligned}
\label{eq:eq16}
\frac{\partial\ell}{\partial \theta^{t}_{c(r)}}&= \frac{\partial\ell}{\partial O_c}\frac{\partial O_c}{\partial R_c}\frac{\partial R_c}{\partial\theta^{t}_{c}}\\
&=\widehat{O}_{c}\alpha^{t}(O_{c}-1)(1-R_c^2),
\end{aligned}
\end{equation}
where $c$ refers to the 1-valued index in the corresponding ground-truth label represented by an one-hot encoding vector.

Since a different design of $\boldsymbol{P}$ results in varying filtering strategies, the update for MODNP and MODNF needs to be separately considered. For MODNP, the gradients with respect to $w^{s}_{ij}$ and $\theta^{s}_{ij}$ can be calculated as
\begin{equation}
\begin{aligned}
\label{eq:eq17}
\frac{\partial\ell}{\partial w^{s}_{ij(r)}}= &\frac{\partial\ell}{\partial O_c}\frac{\partial O_c}{\partial R_c}\frac{\partial R_c}{\partial V_c}\frac{\partial V_c}{\partial Z_{j}}\frac{\partial Z_{j}}{\partial Y_{ij}}\frac{\partial Y_{ij}}{\partial w^{s}_{ij}}\\
=&\widehat{O}_{c}\alpha^{t}w^{t}_{c}\alpha^{s}x_{i}(1-O_{c})(1-R_c^2)p_{jc}\\
&(\prod_{q=1(q \neq i)}^{D}Y_{qj})Y_{ij}(Y_{ij}-1),
\end{aligned}
\end{equation}

\begin{equation}
\begin{aligned}
\label{eq:eq18}
\frac{\partial\ell}{\partial \theta^{s}_{ij(r)}}=&\frac{\partial\ell}{\partial O_c}\frac{\partial O_c}{\partial R_c}\frac{\partial R_c}{\partial V_c}\frac{\partial V}{\partial Z_{j}}\frac{\partial Z_{j}}{\partial Y_{ij}}\frac{\partial Y_{ij}}{\partial\theta^{s}_{ij}}\\
=&\widehat{O}_{c}\alpha^{t}w^{t}_{c}\alpha^{s}(1-O_{c})(1-R_c^2)p_{jc}\\
&(\prod_{q=1(q \neq i)}^{D}Y_{qj})Y_{ij}(1-Y_{ij}),
\end{aligned}
\end{equation}
where $p_{jc}$ is the element of the $j$-the row and the $c$-the column of the $\boldsymbol{P}$; for MODNF, the gradients with respect to $w^{s}_{ij}$ and $\theta^{s}_{ij}$ can be measured via
\begin{equation}
\begin{aligned}
\label{eq:eq19}
\frac{\partial\ell}{\partial w^{s}_{ij(r)}}= &\frac{\partial\ell}{\partial O_c}O_{c}(\frac{\partial R_c}{\partial w^{s}_{ij}}-\sum_{s=1}^{C}O_{s}\frac{\partial R_s}{\partial w^{s}_{ij}})\\
=&\widehat{O}_{c}(\frac{\partial R_c}{\partial w^{s}_{ij}}-\sum_{s=1}^{C}O_{s}\frac{\partial R_s}{\partial w^{s}_{ij}}),
\end{aligned}
\end{equation}
\begin{equation}
\begin{aligned}
\label{eq:eq20}
\frac{\partial\ell}{\partial \theta^{s}_{ij(r)}}= &\frac{\partial\ell}{\partial O_c}O_{c}(\frac{\partial R_c}{\partial \theta^{s}_{ij}}-\sum_{s=1}^{C}O_{s}\frac{\partial R_s}{\partial \theta^{s}_{ij}})\\
=&\widehat{O}_{c}(\frac{\partial R_c}{\partial \theta^{s}_{ij}}-\sum_{s=1}^{C}O_{s}\frac{\partial R_s}{\partial \theta^{s}_{ij}}),
\end{aligned}
\end{equation}

\begin{equation}
\begin{aligned}
\label{eq:eq21}
\frac{\partial\mathcal{R}_{s}}{\partial w^{s}_{ij}} = &\frac{\partial R_{s}}{\partial V_{s}}\frac{\partial V_{s}}{\partial Z_{j}}\frac{\partial Z_{j}}{\partial Y_{ij}}\frac{\partial Y_{ij}}{\partial w^{s}_{ij}}(s = 1, ..., C)\\
=&\alpha^{t}w^{t}_{s}\alpha^{s}x_{i}(1-R_s^2)\\
&(\prod_{q=1(q \neq i)}^{D}Y_{qj})Y_{ij}(1-Y_{ij}),
\end{aligned}
\end{equation}

\begin{equation}
\begin{aligned}
\label{eq:eq22}
\frac{\partial\mathcal{R}_{s}}{\partial \theta^{s}_{ij}} = &\frac{\partial R_{s}}{\partial V_{s}}\frac{\partial V_{s}}{\partial Z_{j}}\frac{\partial Z_{j}}{\partial Y_{ij}}\frac{\partial Y_{ij}}{\partial \theta^{s}_{ij}}(s = 1, ..., C)\\
=&\alpha^{t}w^{t}_{s}\alpha^{s}(1-R_s^2)\\
&(\prod_{q=1(q \neq i)}^{D}Y_{qj})Y_{ij}(Y_{ij}-1).
\end{aligned}
\end{equation}
Note that we modify all zero-valued $Y$ to 1 during training to prevent the phenomenon of ``dendrite dying'' \footnote{We treat $Y$ as 0 when it is smaller than $10^{-6}$ in all experiments.}. That is, when there exists one connection condition of the synaptic node which falls into the \textit{Unvarying 0 state}, the gradient in the corresponding dendrite will always be reduced to 0 and never receive updates. A detailed derivation is given in \textbf{Proposition} \ref{T1}.

\begin{theorem}\label{T1}
   For any given integer $j$, if any $Y_{ij}$ in Eq. \ref{eq:eq2} is non-zero, the gradient of the $j$-th dendrite is a non-zero vector.
\end{theorem}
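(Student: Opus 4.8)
The plan is to trace how the product rule in the dendrite layer (Eq.~\ref{eq:eq2}) propagates into the synaptic gradients and to isolate the single factor responsible for ``dendrite dying.'' First I would fix the dendrite index $j$ and collect the partial derivatives of the loss with respect to all synaptic parameters on that dendrite, namely $\{\partial\ell/\partial w^{s}_{ij},\,\partial\ell/\partial\theta^{s}_{ij}\}_{i=1}^{D}$, into a single gradient vector $\mathbf{g}_{j}$. Reading off Eqs.~\ref{eq:eq17}--\ref{eq:eq22}, every component of $\mathbf{g}_{j}$ factorizes as a common downstream term (independent of $i$), times the dendrite-layer Jacobian $\partial Z_{j}/\partial Y_{ij}$, times the local synaptic slope $\partial Y_{ij}/\partial(\cdot)$. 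Hence the whole argument reduces to controlling these last two factors.

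The core step is the direct computation $\partial Z_{j}/\partial Y_{ij}=\prod_{q=1,\,q\neq i}^{D}Y_{qj}$, obtained by differentiating the product in Eq.~\ref{eq:eq2}. This is a product of $D-1$ synaptic outputs that all live on the same dendrite $j$. Under the hypothesis that the $Y_{ij}$ are non-zero, each such product is a finite product of non-zero reals and is therefore itself non-zero for \emph{every} index $i$; the multiplicative coupling among the synapses of a dendrite thus cannot by itself collapse $\mathbf{g}_{j}$ to the zero vector, and the factor $\partial Z_{j}/\partial Y_{ij}$ stays non-zero throughout.

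To make the ``dendrite dying'' phenomenon precise and to motivate the hypothesis, I would argue the contrapositive. If a single synapse sits in the \emph{Unvarying 0 state} so that $Y_{qj}=0$, then $\prod_{q'\neq i}Y_{q'j}=0$ for all $i\neq q$ (the vanishing factor $Y_{qj}$ appears in the product), while for $i=q$ the local slope $\partial Y_{qj}/\partial w^{s}_{qj}\propto Y_{qj}(1-Y_{qj})$ also vanishes. Thus one zero-valued $Y$ annihilates the entire dendrite gradient at once, which is exactly the failure the text describes; re-assigning such a $Y$ to $1$ restores a strictly positive product and reinstates learning.

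The main obstacle is that avoiding the product bottleneck is necessary but not quite sufficient for $\mathbf{g}_{j}\neq\mathbf{0}$: the local sigmoid slope $Y_{ij}(1-Y_{ij})$ still vanishes when $Y_{ij}=1$, so a synapse clamped from $0$ to $1$ contributes zero to its own component. I would therefore close the argument by exhibiting at least one surviving component, using that the remaining factors are generically non-degenerate — in particular $1-R_{c}^{2}>0$ always holds because $R_{c}=\tanh(\cdot)\in(-1,1)$, the filter entry $p_{jc}$ is non-zero on an active column, and $x_{i}(1-O_{c})$ is non-zero for a genuinely varying input — so that at least one non-saturated synapse yields a non-zero entry of $\mathbf{g}_{j}$. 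Handling this saturation edge case cleanly, rather than the product factor, is where I expect the proof to require the most care.
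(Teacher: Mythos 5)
Your core argument coincides with the paper's: both read the statement as ``all $Y_{ij}$ on dendrite $j$ are non-zero,'' differentiate the product in Eq.~\ref{eq:eq2} to get $\partial Z_{j}/\partial Y_{ij}=\prod_{q\neq i}Y_{qj}$, conclude that this factor is a product of non-zero reals and hence non-zero, and establish the failure mode by noting that a single zero-valued synapse output annihilates every component of the dendrite's gradient at once --- your remark that the $i=q$ component also dies through the local slope $Y_{qj}(1-Y_{qj})$ is precisely the paper's CASE~3. Where you go beyond the paper is in pushing the chain rule all the way to the loss: the paper's CASE~1 stops at the non-vanishing of $\partial Z_{j}/\partial Y_{ij}$ and declares the network updatable, silently ignoring that each component in Eqs.~\ref{eq:eq17}--\ref{eq:eq22} also carries the sigmoid slope $Y_{ij}(1-Y_{ij})$ (which vanishes exactly at the clamped value $Y_{ij}=1$ that the authors introduce to cure dendrite dying) and the common prefactor $\widehat{O}_{c}\,\alpha^{t}w^{t}_{c}\alpha^{s}(1-O_{c})(1-R_{c}^{2})p_{jc}$ (which vanishes if, e.g., $w^{t}_{c}=0$ or $p_{jc}=0$). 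You are right that these are the places where the proposition, read literally, can still fail, and right that finishing the proof means exhibiting at least one surviving component; your closing step (``generically non-degenerate,'' at least one non-saturated synapse) is still informal, but it is no less rigorous than the paper's own proof, which does not acknowledge the issue at all. In short: same decomposition and same key computation, with your version being the more careful of the two at the final step.
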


\begin{proof}
    \textit{The proof for} \textbf{Proposition} \ref{T1} \textit{is as follows. Here, by referring to Eq. \ref{eq:eq2}, we assume the following multiplication:}     
    \begin{equation}
    \label{eq:eq23}
        z=\prod_{i=1}^{D}y_{i},
    \end{equation}
    \textit{where we omit the subscript $j$ for simplicity without the loss of generality. With a minor abuse of notation, we define the variable set $Y=\{y_1,\cdots,y_D\}$. By taking the partial derivative for Eq. \ref{eq:eq23} with respect to $y_p\in Y$, we obtain}     
    \begin{equation}
    \label{eq:eq24}
        \frac{\partial z}{\partial y_p}=\prod_{i=1,  i\neq p}^{D}y_{i}.
    \end{equation}
    \textit{Given the number of zero-valued variables in $Y$, the following three cases can be discussed during the model update:}
    \begin{itemize}
        \item     \textit{CASE 1: If $\forall y_p\in Y, y_{p}\neq 0$ holds, then the partial derivatives obtained via Eq. \ref{eq:eq24} is a non-zero value. Therefore, the network can be successfully updated.}
        \item   \textit{CASE 2: If at least two variables in $Y$ are 0 (e.g., $y_{p}$ and $y_{q}$), then the partial derivatives with respect to all variables are always 0 despite the update being performed, because the partial derivatives (Eq. \ref{eq:eq24}) should always include one or both of $y_{p}$ and $y_{q}$. Therefore, the issue of ``dendrite dying'' is triggered and the network fails to update. }        
        \item   \textit{CASE 3: If one and only one variable (e.g., $y_{p}$) is 0, then the partial derivatives with respect to the remaining variables in Y (i.e., $\{y_1,\cdots,y_{p-1},y_{p+1}, \cdots, y_D\}$) are also 0. Ideally, because the partial derivative with respect to $y_{p}$ is still a non-zero value, all of variables will be non-zero values in the next round of update. However, according to the chain rule, the update calculation still involves multiplying the partial derivative of Eq. \ref{eq:eq1}, which contains the $y_{p}$. As such, the gradients of $\ell$ in Eq. \ref{eq:eq10} with respect to $w^{s}_{ij(r)}$ and $\theta^{s}_{ij(r)}$ always remain 0. Therefore, the network is still unable to successfully update since CASE 3 is consequently equivalent to CASE 2.}
        
    \end{itemize}
\end{proof}

To nonetheless tackle this issue, we simply set $Y_{qj}$ to 1 to enable back-propagation without manually imposing additional constraints. We also adopt the heuristic algorithms to optimize the MODNP and MODNF (see Sec. \ref{experiments}).


\section{Experiments}\label{experiments}

\begin{table*}[htb]
\centering
\caption{Detailed descriptions of the classification datasets.}
\label{table1}
\begin{tabular}{cccccc}
\toprule
\textbf{Dataset}              & \textbf{Num. of training samples} & \textbf{Num. of testing samples} & \textbf{Num. of classes} & \textbf{Dimensionality} & \textbf{Description}                 \\ \hline
Breast cancer        & 546                      & 137                     & 2               & 9              & benign or malignant         \\
Blood transfusion    & 598                      & 150                     & 2               & 4              & donate blood or not         \\
Heart disease        & 212                      & 91                      & 2               & 13             & angiographic disease status \\
Raisin grains        & 720                      & 180                     & 2               & 7              & varieties of raisin         \\
Caesarian Section    & 64                       & 16                      & 2               & 5              & recommend cesarean or not   \\
Glass identification & 171                      & 43                      & 6               & 9              & types of glasses            \\
Wine                 & 142                      & 36                      & 3               & 13             & types of wines              \\
Car evaluation       & 1209                     & 519                     & 4               & 6              & car acceptability           \\
Iris                 & 90                       & 60                      & 3               & 4              & types of iris               \\
Seeds                & 168                      & 42                      & 3               & 7              & varieties of wheat          \\
Ecoli                & 228                      & 99                      & 5               & 7              & protein localization sites  \\ \bottomrule
\end{tabular}
\end{table*}
In this section, we present extensive experimental results on 11 benchmark datasets against other ANNs to evaluate MODN. \textit{All quantitative evaluations in Tab. \ref{table4} and \ref{table5} in the following of this section report the averaged experimental results over 30 runs to respect the criterion of significance test.}

\subsection{Datasets and implementation details}
\noindent \textbf{Datasets.} We experiment on the following five binary classification datasets: Breast cancer \cite{breast}, Blood transfusion \cite{blood}, Heart disease\cite{heart}, Raisin grains \cite{ccinar2020classification}, and Caesarian Section \cite{caesarian}. Additionally, we use six multi-class classification datasets: Glass identification \cite{glass}, Wine \cite{wine}, Car evaluation \cite{car}, Iris \cite{iris}, Seeds \cite{seeds} and Ecoli \cite{ecoli} to further confirm the capability of our method in handling multi-class classification tasks. All the above mentioned datasets for both binary and multi-class classifications are directly taken from the University of California \cite{blake1998uci}. We manually create our own training/test data split since they are not officially provided. The details of each dataset are summarized in Tab. \ref{table1}.

\begin{table}
\caption{Parameters of the optimization algorithms.}
\label{table2}
\resizebox{\linewidth}{!}{
\begin{tabular}{lll}
\toprule
\textbf{Optimizer} & \textbf{Parameter}                      & \textbf{Value}                                                                                                \\ \hline
BP     & learning rate                  & 0.01                                                                                                 \\
       & maximum iterations $R$            & 3000                                                                                                 \\
BBO    & modification probability       & 1                                                                                                    \\
       & immigration probability bounds & {[}0, 1{]}                                                                                           \\
       & step size                      & 1                                                                                                    \\
       & maximum immigration rate       & 1                                                                                                    \\
       & maximum migration rate         & 1                                                                                                    \\
       & mutation probability           & 0.1                                                                                                  \\
       & population size                & 100                                                                                                  \\
       & maximum iterations $R$            & \begin{tabular}[c]{@{}l@{}}300 for Breast, Blood, Wine, \\ Car and Iris; 400 for others\end{tabular} \\
GA     & encoding strategy              & binary gray                                                                                          \\
       & crossover probability          & 1                                                                                                    \\
       & mutation probability           & 0.01                                                                                                 \\
       & population size                & 100                                                                                                  \\
       & maximum iterations $R$            & \begin{tabular}[c]{@{}l@{}}300 for Breast, Blood, Wine, \\ Car and Iris; 400 for others\end{tabular} \\
PSO    & inertia weight                 & 1                                                                                                    \\
       & cognitive coefficient          & 0.3                                                                                                  \\
       & social coefficient             & 0.3                                                                                                  \\
       & population size                & 200                                                                                                  \\
       & maximum iterations $R$            & \begin{tabular}[c]{@{}l@{}}300 for Breast, Blood, Wine, \\ Car and Iris; 400 for others\end{tabular} \\
PBIL   & learning rate                  & 0.05                                                                                                 \\
       & negative learning rate         & 0.05                                                                                                 \\
       & num. of best individuals       & 1                                                                                                    \\
       & num. of bad populations        & 0                                                                                                    \\
       & population size                & 200                                                                                                  \\
       & maximum iterations $R$            & \begin{tabular}[c]{@{}l@{}}300 for Breast, Blood, Wine, \\ Car and Iris; 400 for others\end{tabular} \\
ES     & global variance                & 1                                                                                                    \\
       & num. of new individuals        & 10                                                                                                   \\
       & population size                & 250                                                                                                  \\
       & maximum iterations $R$            & \begin{tabular}[c]{@{}l@{}}300 for Breast, Blood, Wine, \\ Car and Iris; 400 for others\end{tabular} \\ \bottomrule
\end{tabular}}
\end{table}

\noindent \textbf{Implementation details.} MODN involves three key hyperparameters: dendrite number $M$, synaptic scaling factor $\alpha^{s}$, and telodendron scaling factor $\alpha^{t}$. Since we notice that the classification performance is sensitive to these three hyperparameters, similar to \cite{gao2019dendritic, luo2021decision, gao2021fully}, we carefully tune them on each dataset to hopefully achieve the best classification accuracy. See Sec. \ref{statis} for a detailed analysis. The parameter settings for each optimization algorithm are listed in Tab. \ref{table2}. For all the heuristic algorithms, we directly adopt the originally provided configurations from the corresponding literature or the open source library \footnote{Available from http://geatpy.com/index.php/home/}.  

\subsection{Quantitative evaluation} \label{evaluation}
We first investigate the performance of MODN under six types of optimization strategies. To gain a better understanding, we also assess two instances of MODN, namely MODNP and MODNF. For fair comparisons, the involved parameters ($M, \alpha^{s}, \alpha^{t}$) = ($10\times C$, 10,1) are equally set for all three models. Tab. \ref{table3} shows the classification results. It can be observed that among all the optimization methods, BBO achieves the superior performance on most datasets (9 in 11), whereas BP is generally less capable of exploring a decent optimization. The primary reason is that, while resetting the zero-valued gradients to 1 eases the dilemma of ``dendrite dying'' for BP, the multiplication operation in Eqs. \ref{eq:eq17} to \ref{eq:eq20} can still result in excessively small gradients. 

Regarding the model-wise performance, it can be seen from  Tab. \ref{table3} that MODN generally outperforms both of the instances on most datasets (7 in 11). This suggests that enforcing a learnable $\boldsymbol{P}$ provides a more powerful learning configuration in jointly exploring the dendrite state in \textit{Exclusive state} and \textit{Communal state}, whereas fixing $\boldsymbol{P}$ does not achieve this. 
Still, there exist some datasets in which MODN does not perform well. We assume the reason that despite our 2-step optimization policy, conventional optimizers are not fully capable of handling the complex MODN structure with $\boldsymbol{P}$ involved. Based on the above observation, we select the BBO for optimization, and the MODN framework for classification learning in the following sections.

\begin{table*}
\centering
\caption{Results of ACC scores of six optimization methods for three models on 11 datasets.}
\label{table3}
\scalebox{1.2}{
\begin{tabular}{{lccccccc}}
\toprule
 & & \textbf{BBO} & \textbf{BP} & \textbf{GA} & \textbf{PSO} & \textbf{PBIL}  & \textbf{ES} \\ \hline
\multirow{3}{*}{Breast cancer} & MODN  & \textbf{0.9835}       & -      & 0.7452 & 0.9319 & 0.9112 & 0.7277         \\
                                   & MODNP & 0.9781 & 0.5178 & 0.7134 & 0.8233 & 0.8567 & 0.7961        \\
                                   & MODNF & 0.9766          & 0.5275 & 0.7706 & 0.9613 & 0.9333 & 0.7815 \\ \hdashline
\multirow{3}{*}{Blood transfusion} & MODN   & \textbf{0.7964}          & -      & 0.7411 & 0.7400 & 0.7409 & 0.7364         \\
                               & MODNP  & 0.7740          & 0.4680 & 0.7300 & 0.7402 & 0.7460 & 0.7358         \\
                               & MODNF  & 0.7953 & 0.5844 & 0.7358 & 0.7436 & 0.7387 & 0.7393  \\ \hdashline
\multirow{3}{*}{Heart disease} & MODN  & \textbf{0.7462}          & -      & 0.6703 & 0.6802 & 0.6824 & 0.6703           \\
                              & MODNP & 0.7066          & 0.4659 & 0.6718 & 0.6703 & 0.6788 & 0.6703           \\
                              & MODNF & 0.7201 & 0.4545 & 0.6729 & 0.6824 & 0.6941 & 0.6703  \\ \hdashline
\multirow{3}{*}{Raisin grains} & MODN   & \textbf{0.8383}          & -      & 0.6461 & 0.8122 & 0.8033 & 0.6216           \\
                               & MODNP  & 0.8365          & 0.5122 & 0.6444 & 0.8022 & 0.7722 & 0.6770          \\
                               & MODNF  & 0.8376 & 0.5431 & 0.6661 & 0.8300 & 0.8009 & 0.6756 \\ \hdashline
\multirow{3}{*}{Caesarian section} & MODN   & 0.6000 & -      & 0.5750 & 0.6479 & 0.6479          & 0.3896         \\
                                   & MODNP  & 0.5604 & 0.4104 & 0.6250 & 0.6271 & \textbf{0.6708} & 0.3917        \\
                                   & MODNF  & 0.5688 & 0.4646 & 0.6146 & 0.6062 & 0.6646          & 0.4188 \\ \hdashline
\multirow{3}{*}{Glass identification} & MODN   & \textbf{0.4682}          & -      & 0.2969 & 0.3302 & 0.3287 & 0.3287       \\
                                      & MODNP  & 0.3829          & 0.2821 & 0.2922 & 0.2992 & 0.2922 & 0.3202      \\
                                      & MODNF  & 0.4651 & 0.1876 & 0.3008 & 0.3566 & 0.3310 & 0.3194  \\ \hdashline
\multirow{3}{*}{Wine} & MODN   & \textbf{0.7426}          & -      & 0.2778 & 0.4685 & 0.4278 & 0.4509  \\
                      & MODNP  & 0.6769 & 0.3519 & 0.2694 & 0.4305 & 0.3676 & 0.4398          \\
                      & MODNF  & 0.6537          & 0.3028 & 0.2759 & 0.5185 & 0.4231 & 0.4463           \\ \hdashline
\multirow{3}{*}{Car evaluation} & MODN   & 0.7567          & -      & 0.7065 & 0.7222 & 0.7160 & 0.7067         \\
                                & MODNP  & \textbf{0.7975} & 0.2453 & 0.7114 & 0.7216 & 0.7200 & 0.7074         \\
                                & MODNF  & 0.7440          & 0.3801 & 0.7129 & 0.7184 & 0.7177 & 0.7084  \\ \hdashline
\multirow{3}{*}{Iris} & MODN   & 0.8506          & -      & 0.6011 & 0.8361 & 0.7422 & 0.5300         \\
                      & MODNP  & \textbf{0.9433} & 0.3072 & 0.6111 & 0.8667 & 0.7728 & 0.5128 \\
                      & MODNF  & 0.7556          & 0.2800 & 0.5806 & 0.6917 & 0.6711 & 0.6528        \\ \hdashline
\multirow{3}{*}{Seeds} & MODN   & 0.8976          & -      & 0.4302 & 0.6833 & 0.5929 & 0.5437        \\
                       & MODNP  & \textbf{0.9286} & 0.3810 & 0.5095 & 0.6865 & 0.6952 & 0.5889  \\
                       & MODNF  & 0.7540          & 0.4286 & 0.5246 & 0.6341 & 0.6246 & 0.5540          \\ \hdashline
\multirow{3}{*}{Ecoli} & MODN   & 0.6259 & -      & 0.3566 & 0.5003 & 0.4696 & \textbf{0.6865}       \\
                       & MODNP  & 0.5710 & 0.2737 & 0.3623 & 0.4121 & 0.4552 & 0.6581               \\
                       & MODNF  & 0.4700 & 0.2296 & 0.3848 & 0.5081 & 0.4875 & 0.3694           \\ \bottomrule
\end{tabular}}
\end{table*}

\subsection{Comparison against other methods} 
We here quantitatively evaluate the classification performance of MODN against prior methods. Specifically, we compare against two baseline artificial neural network models: MLP and DNM \cite{gao2019dendritic} (with BBO as optimizer). Since we only focus on investigating the performance under shallowly-structured models, we create a 2-layer MLP for fair comparisons. The hidden layer dimensionality for MLP is set with the dendrite number $M$ to be consistent with MODN. Also, the MLP is optimized via the same loss function (i.e., cross-entropy) with its final output layer designed as the softmax activation, as ours. As for DNM, we set $(\alpha^{t}, \alpha^{s})$ to the same value as ours for all datasets considering the structure resemblance. We use the fine-tuned value of $\theta$ for DNM as suggested in \cite{gao2019dendritic}. All hyperparameters of MODN (after fine-tuning) are listed in Tab. \ref{table4}. 

\noindent \textbf{Evaluation metrics.} The overall performance evaluation is measured with the average accuracy (\textbf{ACC}) and area under the ROC curve (\textbf{AUC}) metrics over 30 runs. ACC is calculated by $\rm{ACC} = \sum_{n=1}^{N}\mathbbm{1}(\widehat{O}^n = \tau(O^n))$, in which $\tau(\cdot)$ is a mapping that one-hot vectorizes our network output $O^n$. Then, AUC is calculated by the confusion matrix composed of true positive (TP), true negative (TN), false positive (FP), and false negative (FN). TP and TN represent the numbers of correctly classified positive and negative samples, respectively. FP refers to the number of positive samples incorrectly classified, whereas FN refers to the number of negative samples correctly labeled. In terms of binary classification, for any selected positive class, we calculate the true positive rate (\textbf{TPR}, also termed as recall) via $\rm{TPR = TP/(TP+FN)}$, and the false positive rate (\textbf{FPR}) via $\rm{FPR = FP/(FP+TN)}$. While for multi-class classification, we follow the micro-averaging policy by deriving \textbf{TPR} and \textbf{FPR} with $\rm{TPR} = \sum_c\rm{TP}_c/ \sum_c(\rm{TP}_c+\rm{FN}_c)$ and $\rm{FPR} = \sum_c\rm{FP}_c/ \sum_c(\rm{FP}_c+\rm{TN}_c)$, where $c =1, \cdots, C$ is the assumed one positive class, leaving the remaining classes to be negative. The obtained TRPs and FPRs for both binary and multi-class cases are then set to y- and x-axes, respectively to plot the ROC curve. 

To further systematically evaluate the performance of all three methods, we adopt the Wilcoxon signed-rank test (WSRT) \cite{garcia2009study} as an assessment criteria of the significant difference between ours and the compared models. Assume, in the $i$-th run, the paired experimental results (ACC) of two methods (i.e., one for ours and the other for the compared model) for calculating WSRT are $A_i^1$ and $A_i^2$. Then, WSRT calculates a positive-negative signed-rank sum pair ($\mathbf{W^{+}}, \mathbf{W^{-}}$) as the test statistic via

\begin{equation}
\label{eq:eq25}
\left\{
\begin{aligned}
\mathbf{W^{+}}=\sum_{i=1}^{30}\rm{sgn}(A_i^2-A_i^1)R_{i},  \quad   A_i^2-A_i^1>0, \\
\mathbf{W^{-}}=\sum_{i=1}^{30}\rm{sgn}(A_i^2-A_i^1)R_{i},  \quad   A_i^2-A_i^1<0,
\end{aligned}
\right.
\end{equation}
where $R_{i}$ indicates the rank of the absolute value of the difference between $A_i^2$ and $A_i^1$, and $\rm{sgn}()$ denotes the sign function. When the difference between the two paired samples are equal, the ranks computed in both runs are re-assigned with the averaged ranks. In addition, if $A_i^1$ is equal to $A_i^2$, then the resulting pair (both $A_i^1$ and $A_i^2$) will be removed from the rank calculation. 
After obtaining ($\mathbf{W^{+}}, \mathbf{W^{-}}$), WSRT computes the final statistic $-$ p-value as the significance indicator. Considering the large number of result pairs, we use normal distribution approximation for p-value calculation. In particular, we follow the standard WSRT protocols by first estimating the Gaussian parameters via $\hat{\mu} = T(T+1)/4$ and $\hat{\delta} = \sqrt{T(T+1)(2T+1)/24}$, where $T$ refers to the number of result pairs with non-zero differences. Then, the p-value $p$ can be derived by computing $p = P(X \leq \rm{min}\{\mathbf{W^{+}}, \mathbf{W^{-}}\})$, where $ X \sim \mathcal{N}(\hat{\mu},\hat{\delta}^2)$. More precisely, we report the adjusted p-value with the Bonferroni correction to modify $p_{adjusted}$ from $p_{unadjusted}$ to $2p_{unadjusted}$, because the significance test involves two pairs of statistical tests (i.e., comparing MODN with MLP and comparing MODN with DNM). Theoretically, a p-value smaller than 0.05 represents the difference between two compared methods is significant at the 0.05 level. Besides, a larger $\mathbf{W^{+}}$ and a smaller $\mathbf{W^{-}}$ can also serve as the reference of significance.



The quantitative results are reported in Tab. \ref{table5}. It can be confirmed that MODN outperforms MLP by a large margin in all datasets (i.e., both tasks) with respect to AUC and ACC (3rd and 4th columns in Tab. \ref{table5}). The reason can be attributed to the fact that dendrite neural families intrinsically involve a mathematically modeled synapse layer that is capable of addressing the complex XOR problem. As for DNM, we can observe it does not handle the class-imbalanced datasets well (i.e., Breast cancer, Blood transfusion). Compared with DNM, the preference filter in MODN can retain the specific feature with selected dendrites from the minor data mode in classifying on imbalanced datasets. Furthermore, we can observe that the p-values in Tab. \ref{table5}(last column) are generally smaller than $10^{-6}$, which indicates that the obtained superior statistics in Tab. \ref{table5} are sufficiently reliable.



\begin{table}
\centering
\caption{Tuned hyperparameters ($M, \alpha^s, \alpha^t$) of MODN in all classification datasets.}
\label{table4}
\resizebox{!}{!}{
\begin{tabular}{lccc}
\toprule
\textbf{Dataset} & $M$ & $\alpha^s$ & $\alpha^t$ \\ \hline
Breast cancer        & 24       & 8                   & 1.5 \\
Blood transfusion    & 20       & 10                  & 1   \\
Heart disease        & 48       & 8                   & 1.5 \\
Raisin grains        & 8        & 20                  & 0.1   \\
Caesarian Section    & 16       & 1                   & 0.9 \\
Glass identification & 54       & 10                  & 1   \\
Wine                 & 30       & 10                  & 1   \\
Car evaluation       & 40       & 10                  & 1   \\
Iris                 & 12       & 10                  & 1   \\
Seeds                & 16       & 5                   & 1   \\
Ecoli                & 30       & 10                  & 1.5   \\ \bottomrule
\end{tabular}}
\end{table}

\begin{table*}
\centering
\caption{Quantitative evaluation of MODN against MLP and DNM on 11 datasets.}
\label{table5}
\scalebox{1.2}{
\begin{tabular}{llccrrc}
\toprule
\textbf{Datasets}                     & \textbf{Methods}  & \textbf{AUC}$\uparrow$    & \textbf{ACC}$\uparrow$    & $\mathbf{W^{+}}\uparrow$ & $\mathbf{W^{-}}\downarrow$ & \textbf{p-value (0.05)$\downarrow$} \\ \hline
\multicolumn{7}{c}{\textbf{Binary classification}} \\ \hline
\multirow{3}{*}{Breast cancer}        & MLP          & 0.8364          & 0.8397          & 465.0       & 0.0         & \underline{3.602968e-06}     \\
                                      & DNM          & 0.9966          & 0.9824          & 110.5       & 42.5        & 2.153436e-01     \\
                                      & MODN (ours) & \textbf{0.9972} & \textbf{0.9847} & -           & -           & -                \\ \hdashline
\multirow{3}{*}{Blood transfusion}    & MLP          & 0.7660          & 0.7393          & 465.0       & 0.0         & \underline{3.508982e-06}     \\
                                      & DNM          & 0.7532          & 0.7860          & 355.0       & 110.0        & \underline{2.379320e-02}     \\
                                      & MODN (ours) & \textbf{0.8475} & \textbf{0.7964} & -           & -           & -                \\ \hdashline
\multirow{3}{*}{Heart disease}         & MLP          & 0.6011          & 0.6612          & 464.0       & 1.0         & \underline{7.450580e-09}     \\
                                      & DNM          & \textbf{0.9063} & 0.7853          & 327.0       & 108.0       & \underline{3.656010e-02}     \\
                                      & MODN (ours) & 0.9031          & \textbf{0.8147} & -           & -           & -                \\ \hdashline
\multirow{3}{*}{Raisin grains}        & MLP          & 0.6645          & 0.5854          & 465       & 0         & \underline{3.634760e-06}     \\
                                      & DNM          & \textbf{0.9426} & \textbf{0.8793} & 4.0         & 461.0       & \underline{8.172220e-06}     \\
                                      & MODN (ours) & 0.9116          & 0.8483          & -           & -           & -                \\ \hdashline
\multirow{3}{*}{Caesarian section}    & MLP          & 0.4978          & 0.4521          & 465.0       & 0.0         & \underline{3.445290e-06}     \\
                                      & DNM          & \textbf{0.7189} & 0.5479          & 405.0       & 30.0         & \underline{9.352932e-05}     \\
                                      & MODN (ours) & 0.6552          & \textbf{0.6917} & -           & -           & -                \\ \hline
\multicolumn{7}{c}{\textbf{Multi-class classification}} \\ \hline
\multirow{2}{*}{Glass identification} & MLP          & 0.6312          & 0.2930          & 463.5       & 1.5         & \underline{2.118752e-06}     \\
                                      & MODN (ours) & \textbf{0.8407} & \textbf{0.4985} & -           & -           & -                \\ \hdashline
\multirow{2}{*}{Wine}                 & MLP          & 0.7151          & 0.5389          & 429.5       & 5.5        & \underline{4.745973e-06}     \\
                                      & MODN (ours) & \textbf{0.8960} & \textbf{0.7426} & -           & -           & -                \\ \hdashline
\multirow{2}{*}{Car evaluation}       & MLP          & 0.7842          & 0.6683          & 119.0        & 1.0         & \underline{8.876252e-04}     \\
                                      & MODN (ours) & \textbf{0.9412} & \textbf{0.7567} & -           & -           & -                \\ \hdashline
\multirow{2}{*}{Iris}                 & MLP          & 0.6349          & 0.4117          & 435.0       & 0.0         & \underline{2.609510e-06}     \\
                                      & MODN (ours) & \textbf{0.9908} & \textbf{0.9300} & -           & -           & -                \\ \hdashline
\multirow{2}{*}{Seeds}                & MLP          & 0.6882          & 0.4944          & 464.0       & 1.0         & \underline{2.008739e-06}     \\
                                      & MODN (ours) & \textbf{0.9811} & \textbf{0.9143} & -           & -           & -                \\ \hdashline
\multirow{2}{*}{Ecoli}                & MLP          & 0.5657          & 0.2872          & 435.0       & 0.0         & \underline{2.682501e-06}     \\
                                      & MODN (ours) & \textbf{0.8750} & \textbf{0.6313} & -           & -           & -                \\ \bottomrule
\end{tabular}}
\end{table*}

\subsection{Model analysis} \label{statis}
In this section, we provide further insights into MODN by analyzing the parameter sensitivity, convergence, and the filtering matrix. We first evaluate the sensitivity of the hyperparameters for MODN on one binary dataset Raisin and one multi-class classification dataset Iris as two examples. Then, we perform convergence analysis of MODN to inspect the model-optimizer performance combination. We eventually visualize the filtering matrix to verify the feasibility of our learnable design.


\noindent \textbf{Hyperparameter sensitivity.} MODN contains three hyperparameters: the dendrite number $M$, the synaptic scaling factor $\alpha^s$, and the telodendron scaling factor $\alpha^t$. To investigate the underlying sensitivity, we vary ($M$, $\alpha^s$, $\alpha^t$) to report the classification accuracy in Fig. \ref{fig:4} on the Raisin and Iris datasets as two examples. 

The dendrite number $M$ controls the structure of the model. We report in Tab. \ref{fig:4}(a,b) the changes of ACCs by traversing $M$ within [4,20] with fixed $\alpha^s$ and $\alpha^t$. We note that it requires to empirically set $M$ to achieve the best performance, and an overly large or small $M$ can lead to low learning efficiency or overfitting, respectively. Additionally, a smaller $\alpha^t$ and a larger $\alpha^s$ (Fig. \ref{fig:4}(c,d)) promote the ACC better. From the above analysis, we can confirm that $M$ is somewhat sensitive, whereas $\alpha^s$ and $\alpha^t$ exhibit a uniform sensitivity tendency for both datasets. Therefore, we follow \cite{gao2019dendritic} by tuning the hyperparameters (in Tab. \ref{table4}) in our experiments to ensure decent performance. 

\noindent \textbf{Convergence.} We next investigate the convergence of our model. Specifically, we plot the convergence curves of five model-optimizer combinations in Fig. \ref{fig:5} for evaluation. We can see in Fig. \ref{fig:5} that the curves of Eq. \ref{eq:eq10} with the BBO optimizer decline drastically at first, and then tend to be steady (i.e., converged) with increasing iterations. In particular, the MODN+BBO contributes to the earliest convergence (Fig. \ref{fig:5}(a,b,g,j,k)) and generally yields the lowest loss value (Fig. \ref{fig:5}(a,b,d$\sim$e)). We assume the reason is that our two-step learning scheme effectively eases the training. Also, BP-based optimization leads to a slower (blue line in Fig. \ref{fig:5}(f,i$\sim$k)) or failed convergence (purple line in Fig. \ref{fig:5}(d,g,i,j)).

\noindent \textbf{Visualization of $\boldsymbol{P}$.} To gain a deeper understanding of the filtering matrix $\boldsymbol{P}$, we visualize the entire matrix elements of $\boldsymbol{P}$ regarding MODN, MODNF, and MODNP to examine its role on Caesarian, Iris and Glass datasets. Fig. \ref{fig:6} shows the results. For MODN in Fig. \ref{fig:6}(a,d,g), we exhibit the learned $\boldsymbol{P}$ after our 2-step BBO optimization. We can confirm that MODN can capture the desired $\boldsymbol{P}$ to induce the \textit{exclusive state} and the \textit{communal state}. Furthermore, it is noteworthy that all the columns in the learned $\boldsymbol{P}$ Fig. \ref{fig:6}(a,d,g) successfully avoid the all-zero-vector case, indicating that every dendrite in MODN contributes to regressing the output. By contrast, because the filtering matrix $\boldsymbol{P}$ for MODNF and MODNP Fig. \ref{fig:6}(b,c,e,f,h,i) are pre-determined, they cannot adaptively select the beneficial dendrites during learning, which reflects on the low ACCs. We can thus confirm the effectiveness and feasibility of the learning design of $\boldsymbol{P}$ for downstream tasks.

\begin{figure}
\centering
\subfloat[$M$ on Raisin]{\label{fig4:a}\includegraphics[width=0.45\linewidth]{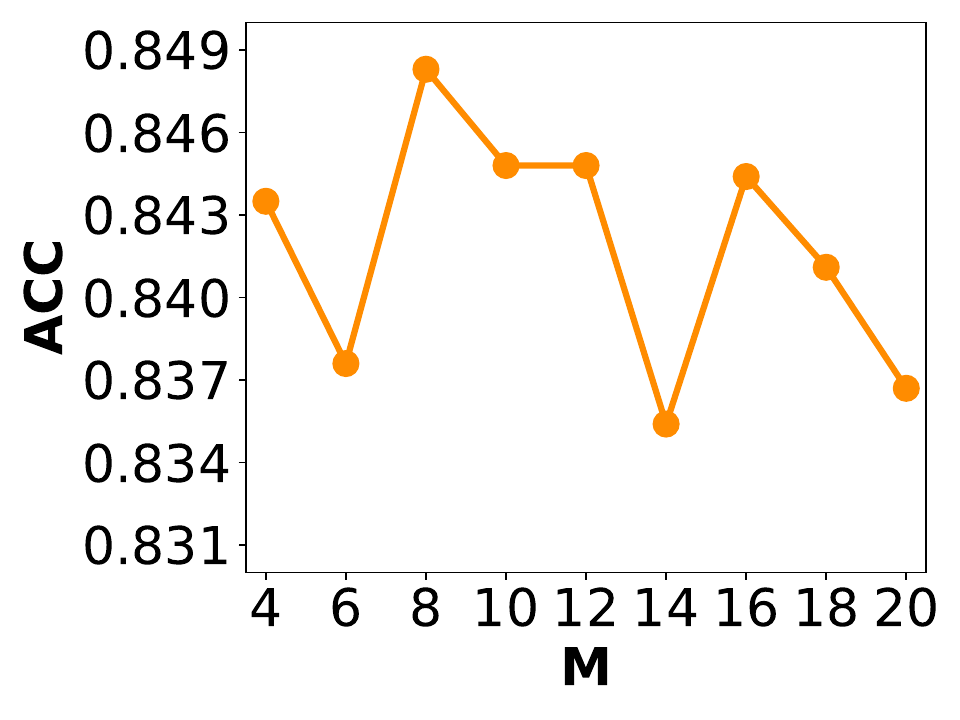}}\quad
\subfloat[$M$ on Iris]{\label{fig4:b}\includegraphics[width=0.45\linewidth]{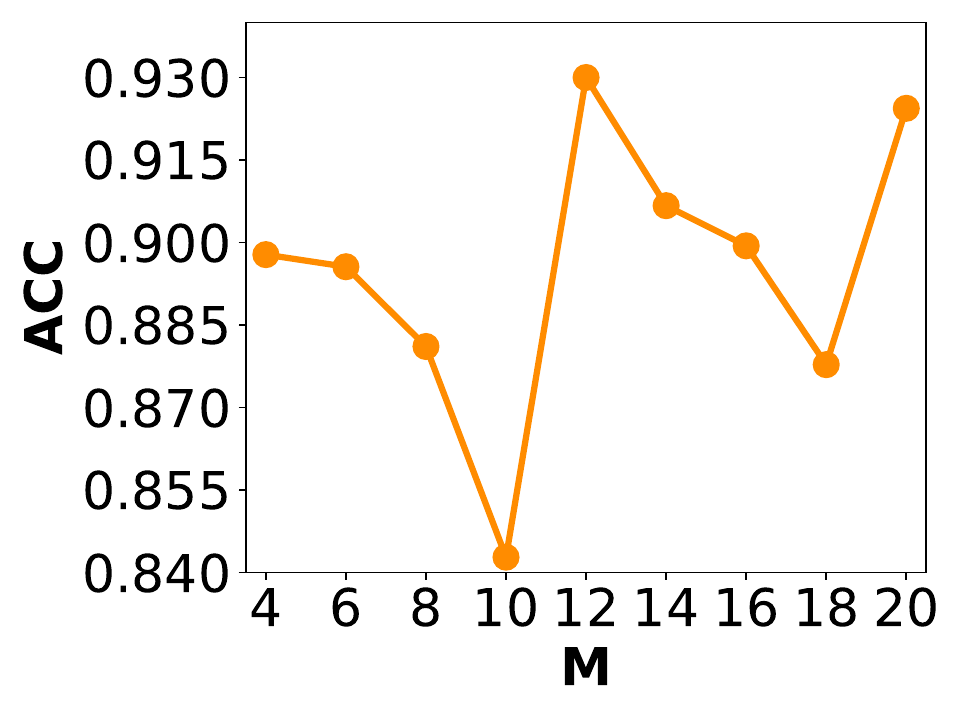}}\\
\subfloat[$\alpha^s$ and $\alpha^t$ on Raisin]{\label{fig4:c}\includegraphics[width=0.45\linewidth]{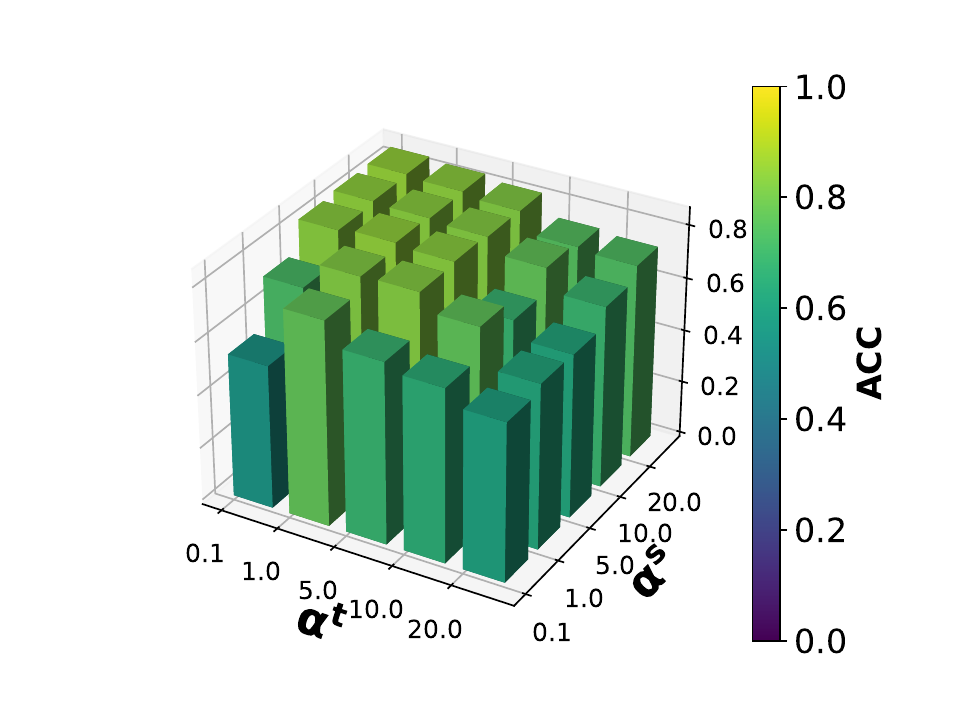}}\quad
\subfloat[$\alpha^s$ and $\alpha^t$ on Iris]{\label{fig4:d}\includegraphics[width=0.45\linewidth]{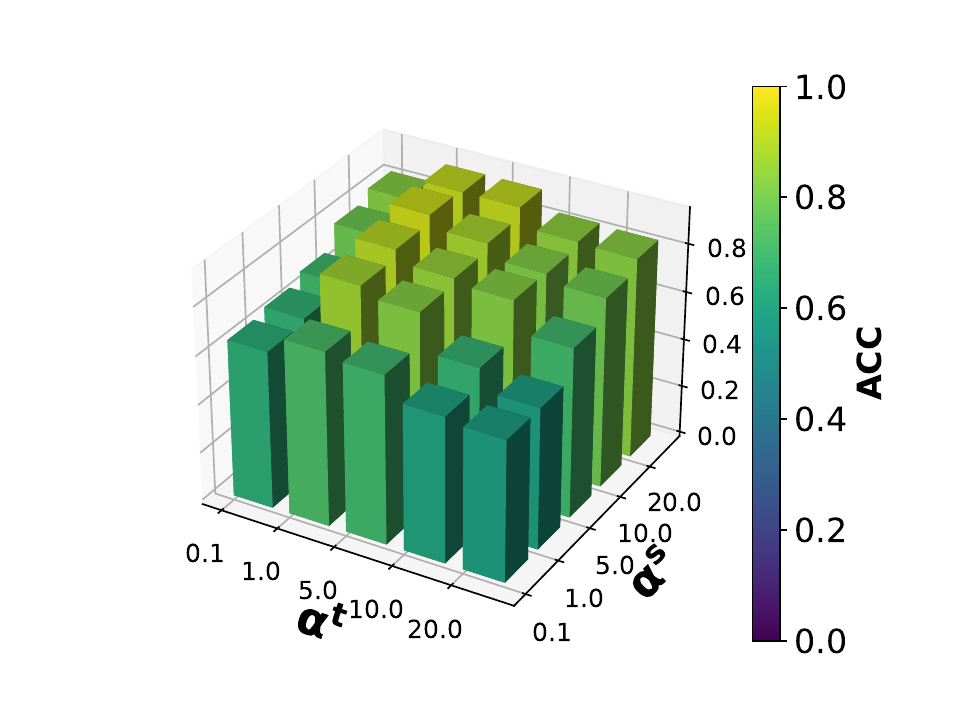}}\\
\caption{Performance evaluation of different parameter settings ($M, \alpha^s, \alpha^t$) on the Raisin and Iris datasets.}
\label{fig:4}
\end{figure}

\begin{figure*}
\centering
\subfloat[Breast]{\label{fig5:a}\includegraphics[width=0.22\linewidth]{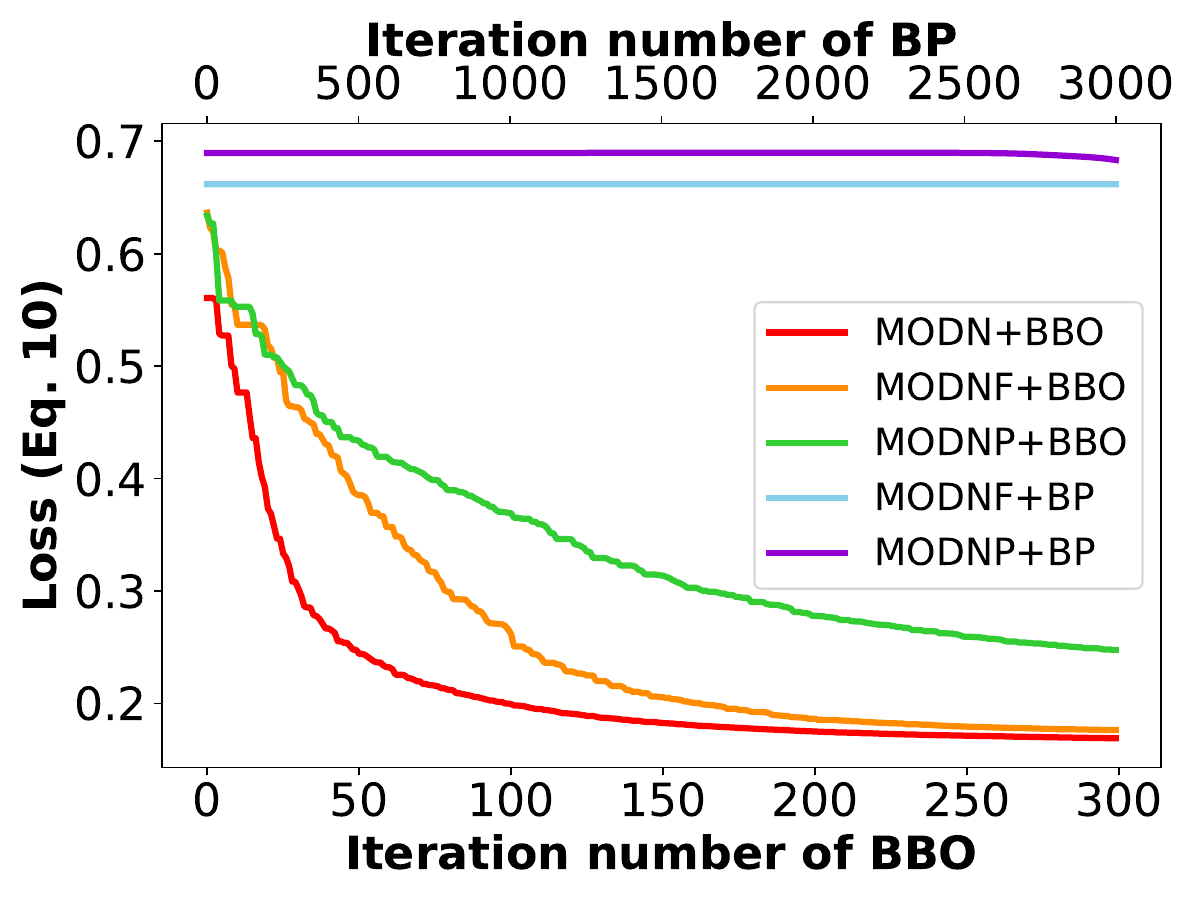}}\quad
\subfloat[Blood transfusion]{\label{fig5:b}\includegraphics[width=0.22\linewidth]{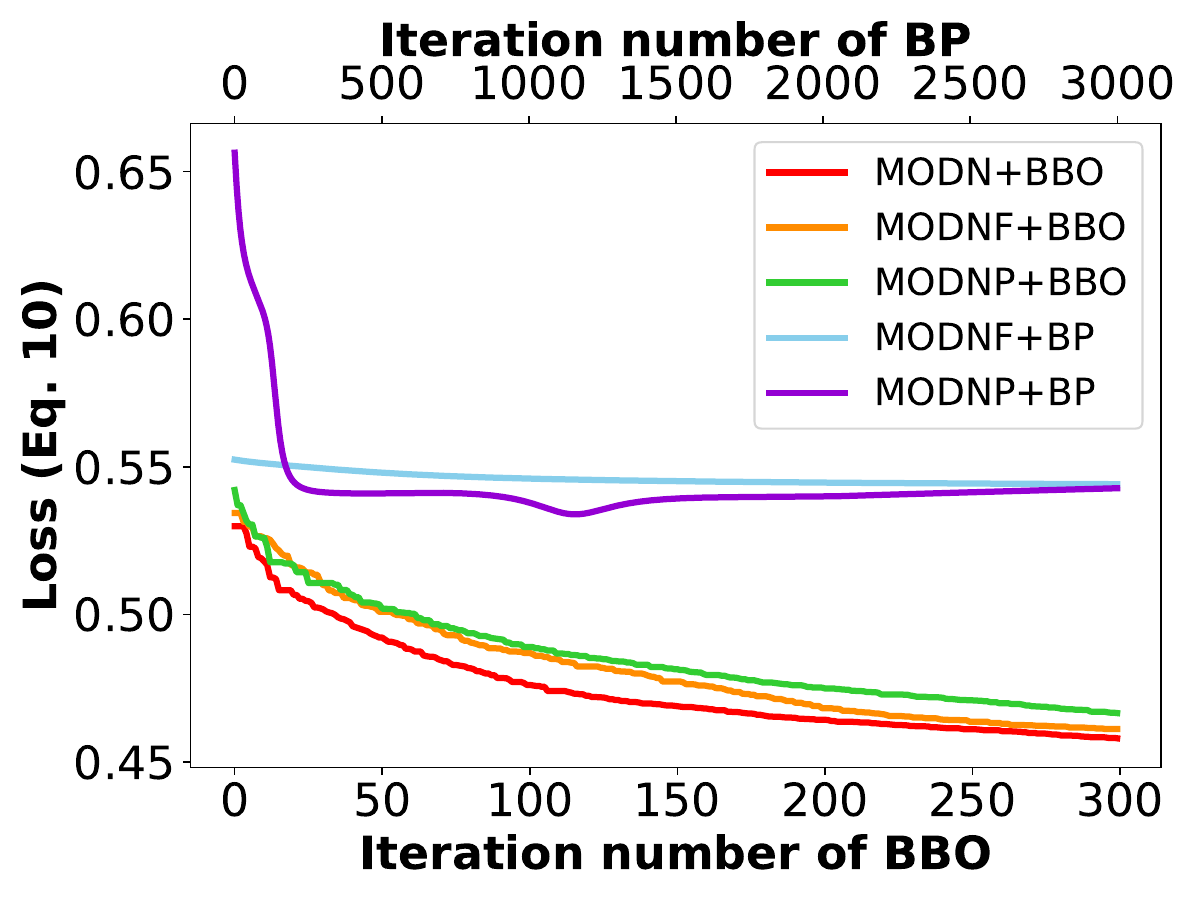}}\quad
\subfloat[Heart]{\label{fig5:c}\includegraphics[width=0.22\linewidth]{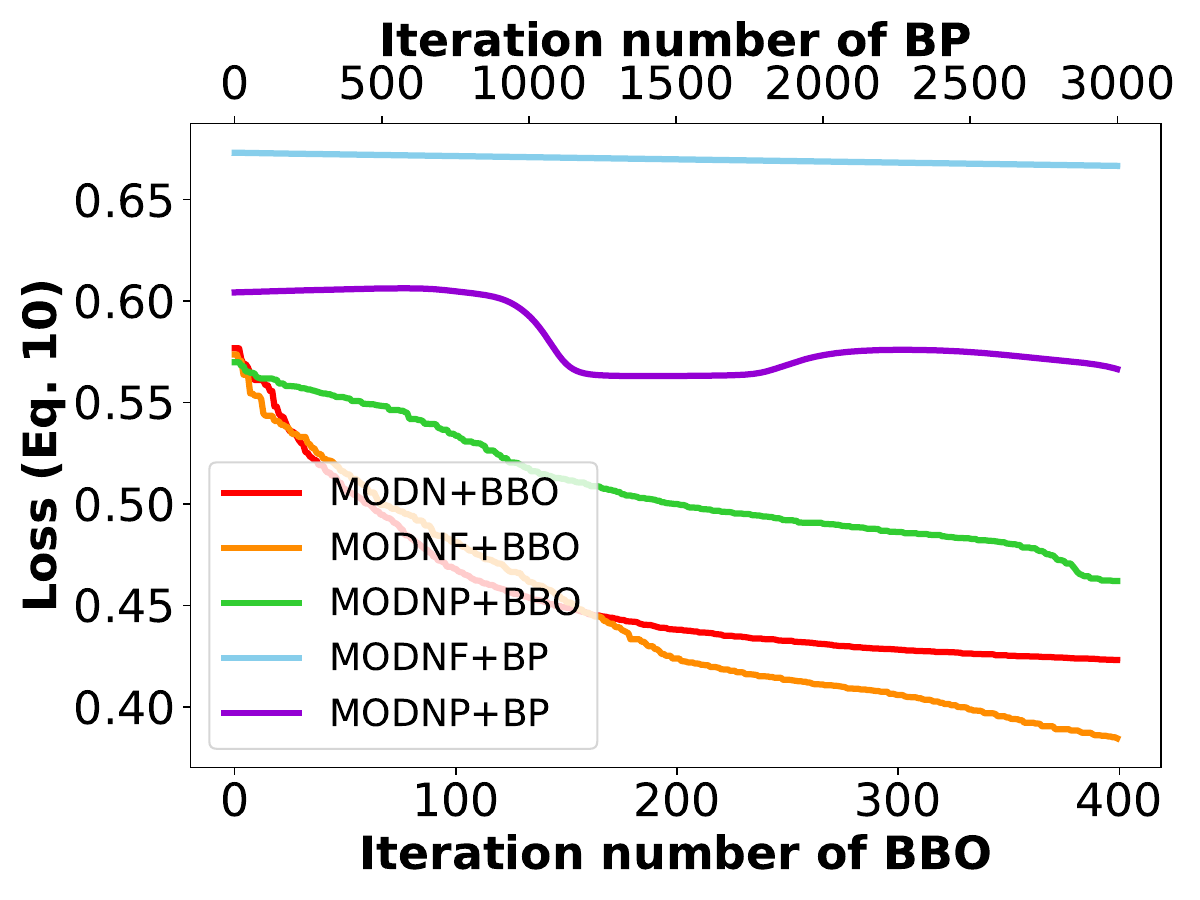}}\quad
\subfloat[Raisin]{\label{fig5:d}\includegraphics[width=0.22\linewidth]{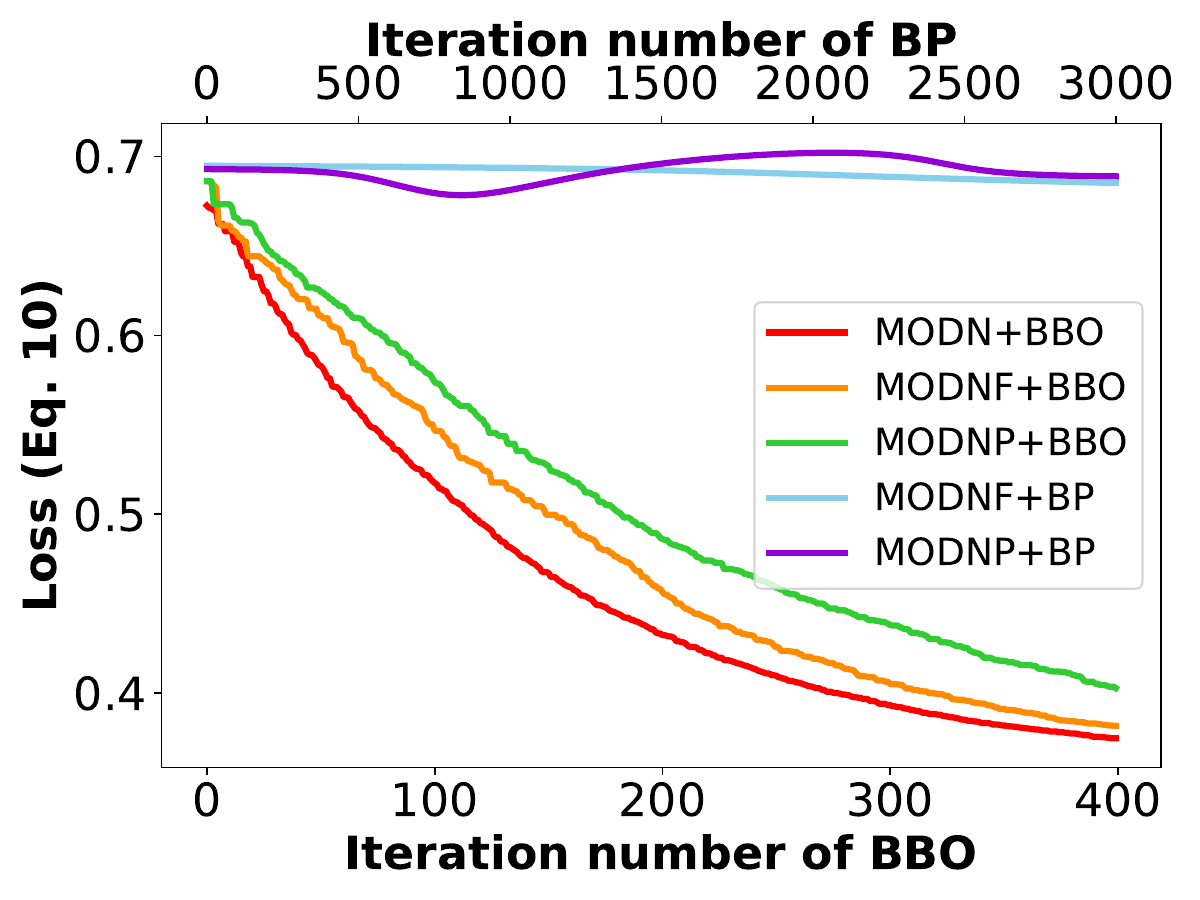}}\\
\subfloat[Caesarian]{\label{fig5:e}\includegraphics[width=0.22\linewidth]{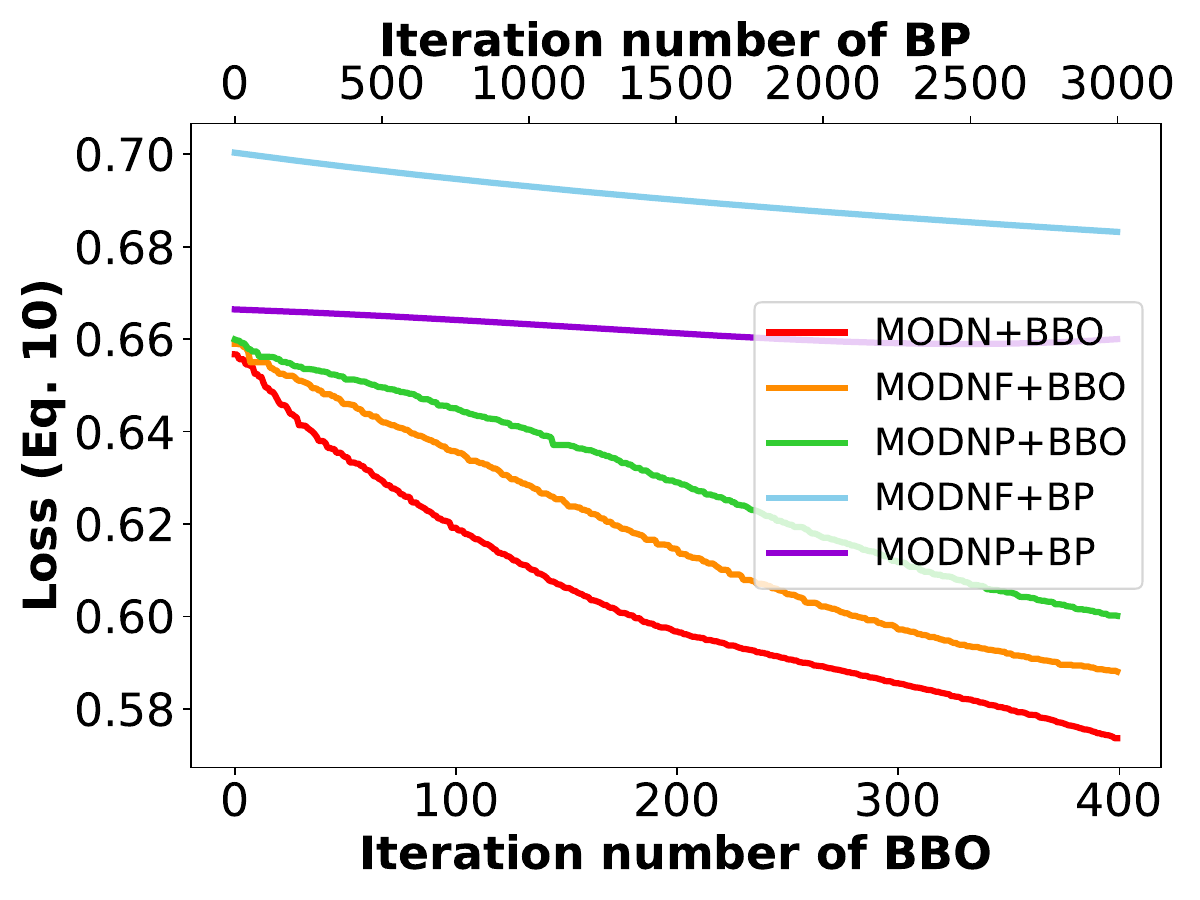}}\quad
\subfloat[Glass]{\label{fig5:f}\includegraphics[width=0.22\linewidth]{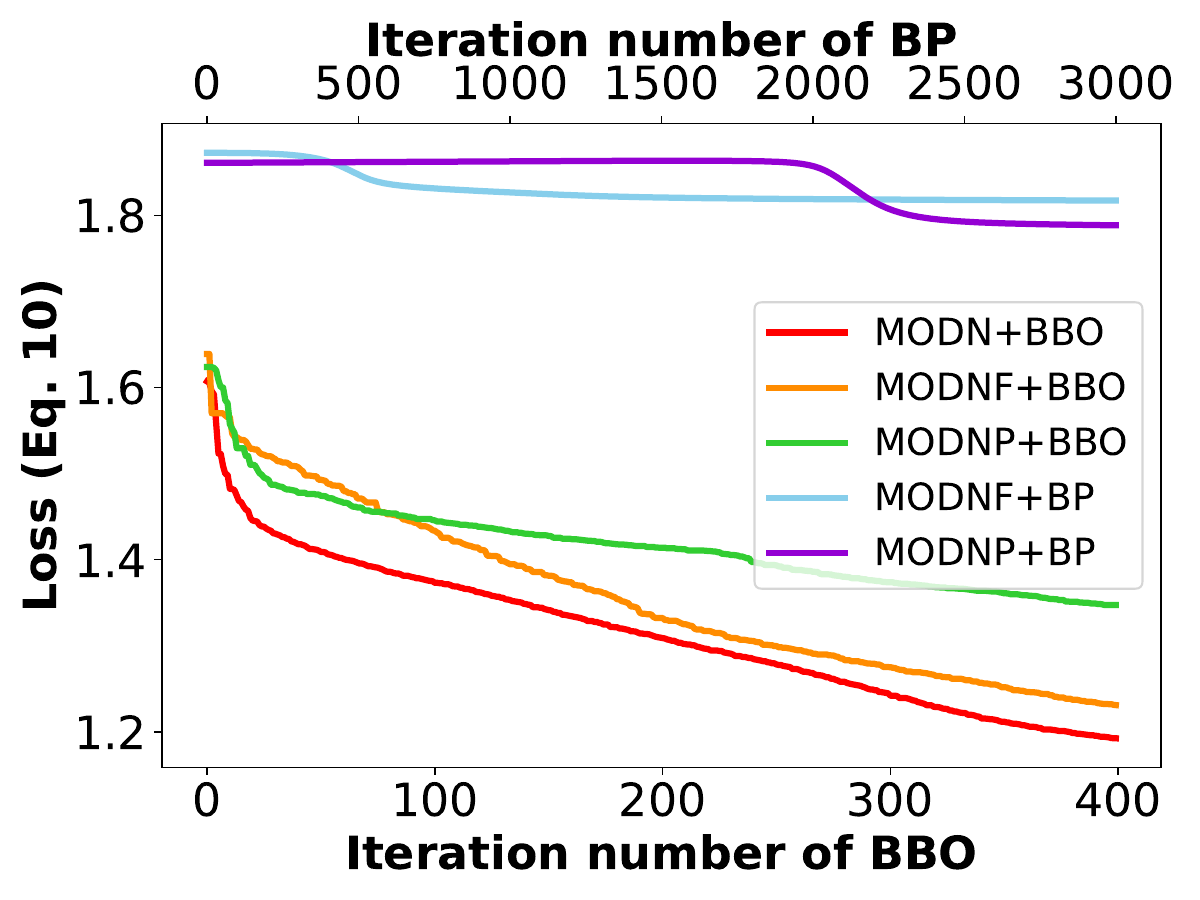}}\quad
\subfloat[Wine]{\label{fig5:g}\includegraphics[width=0.22\linewidth]{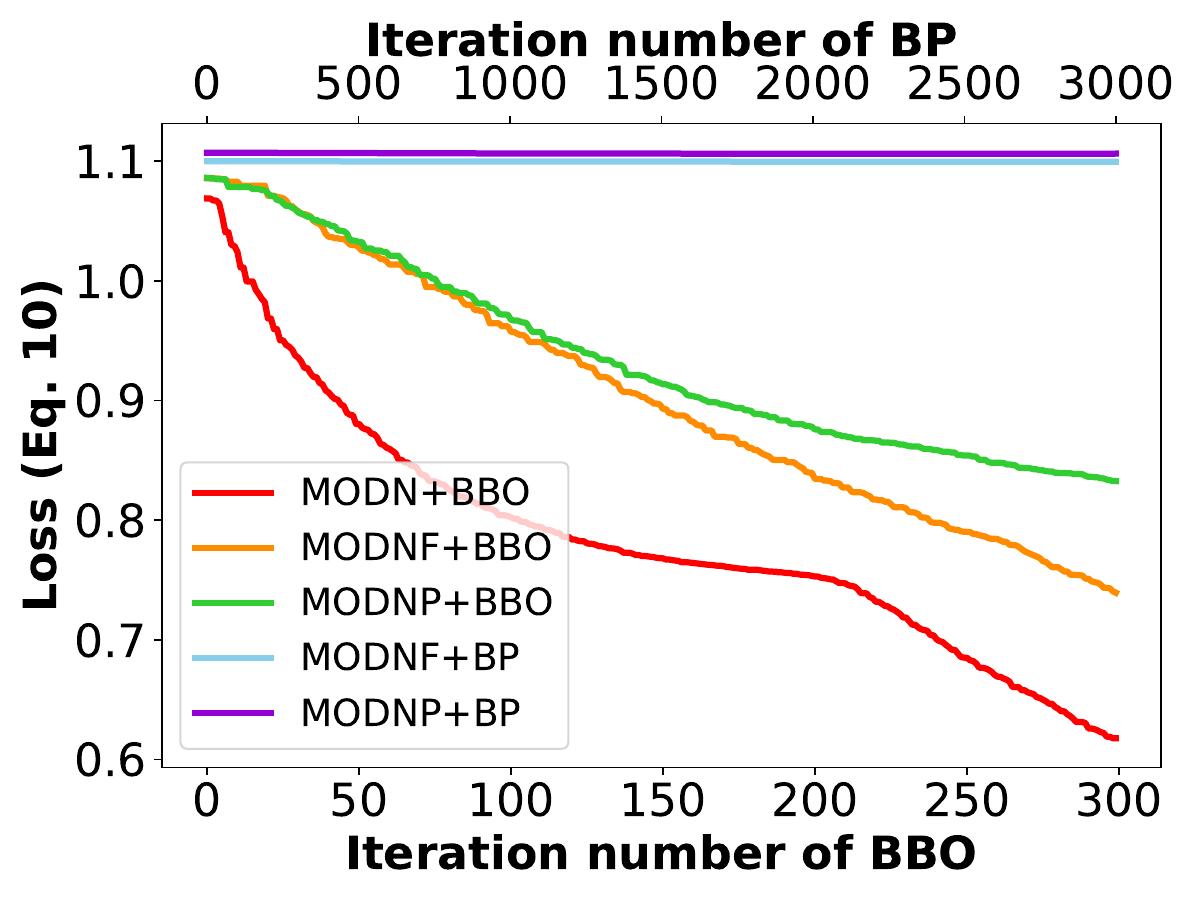}}\quad
\subfloat[Car]{\label{fig5:h}\includegraphics[width=0.22\linewidth]{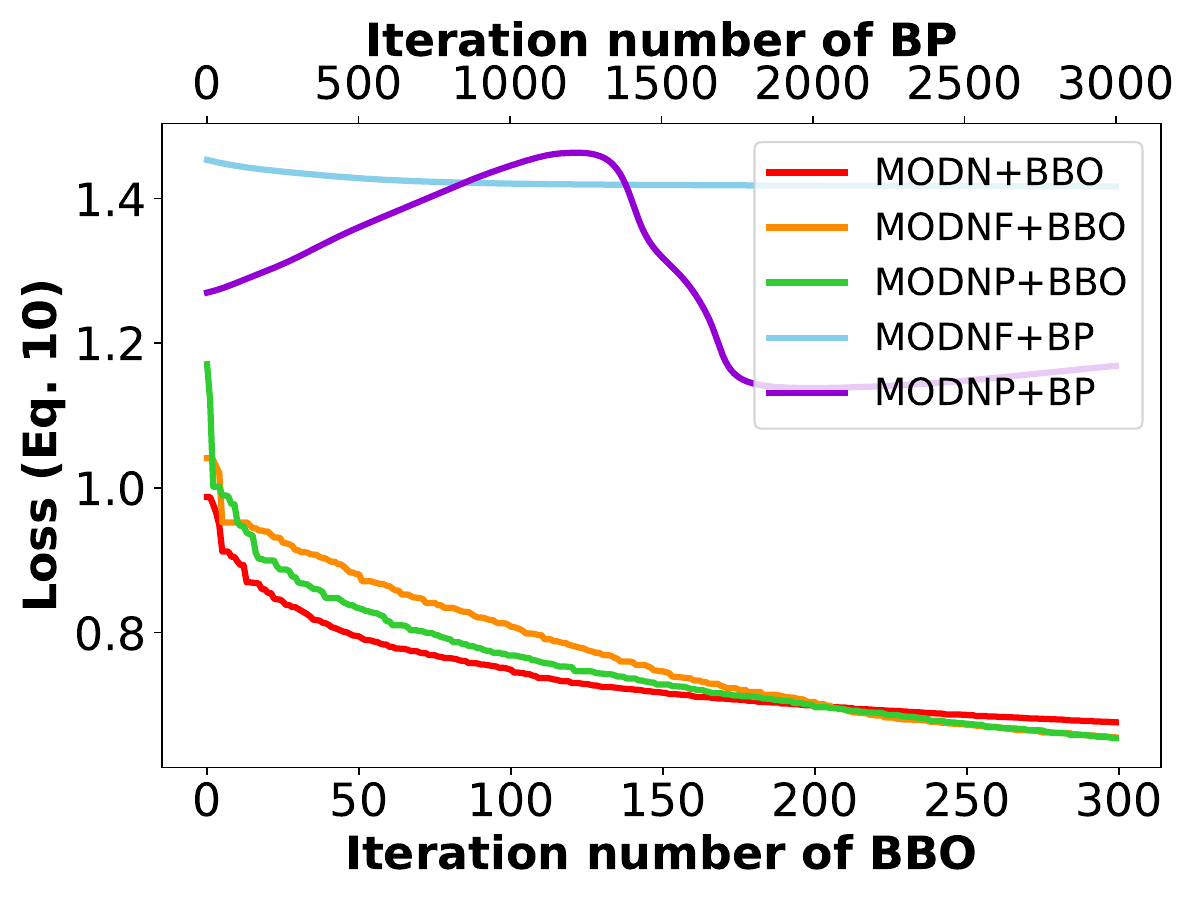}}\\
\subfloat[Iris]{\label{fig5:i}\includegraphics[width=0.22\linewidth]{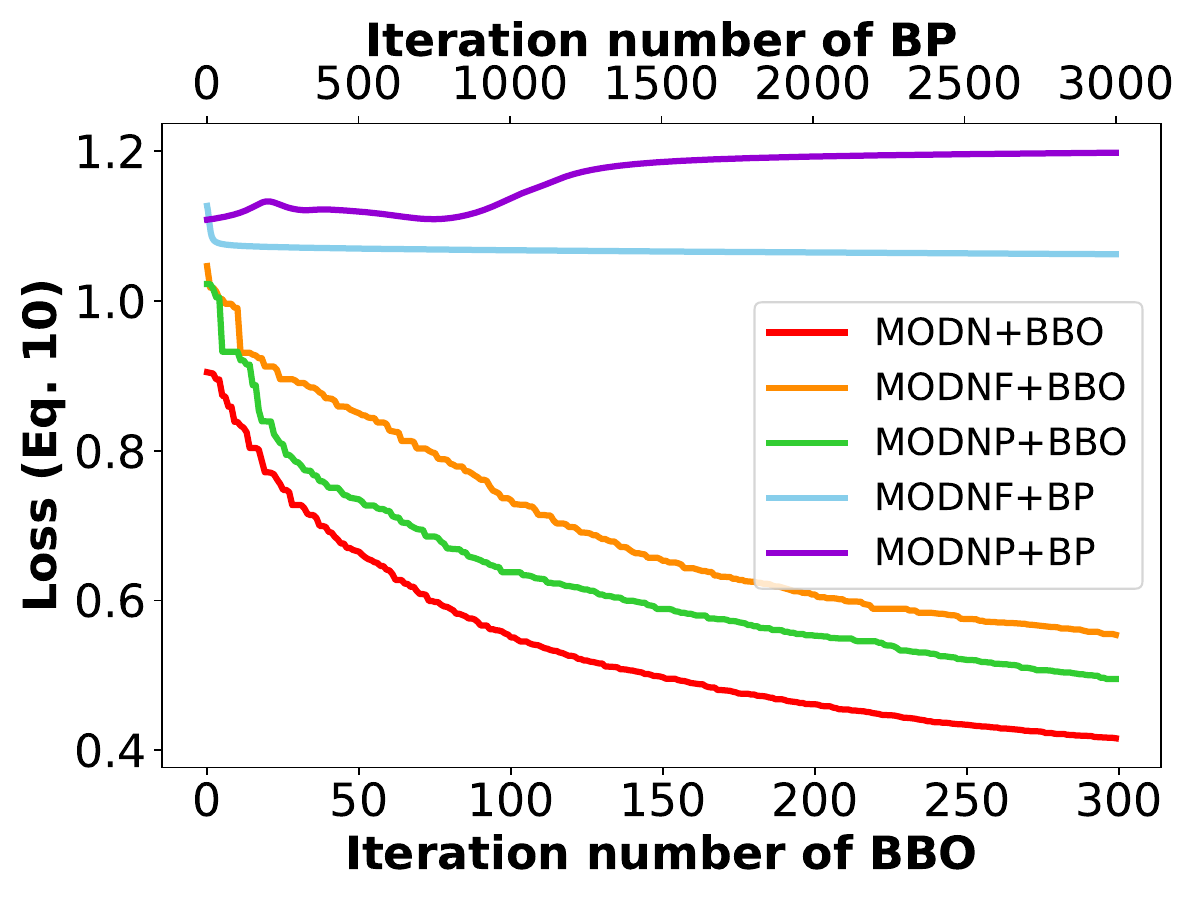}}\quad
\subfloat[Seeds]{\label{fig5:j}\includegraphics[width=0.22\linewidth]{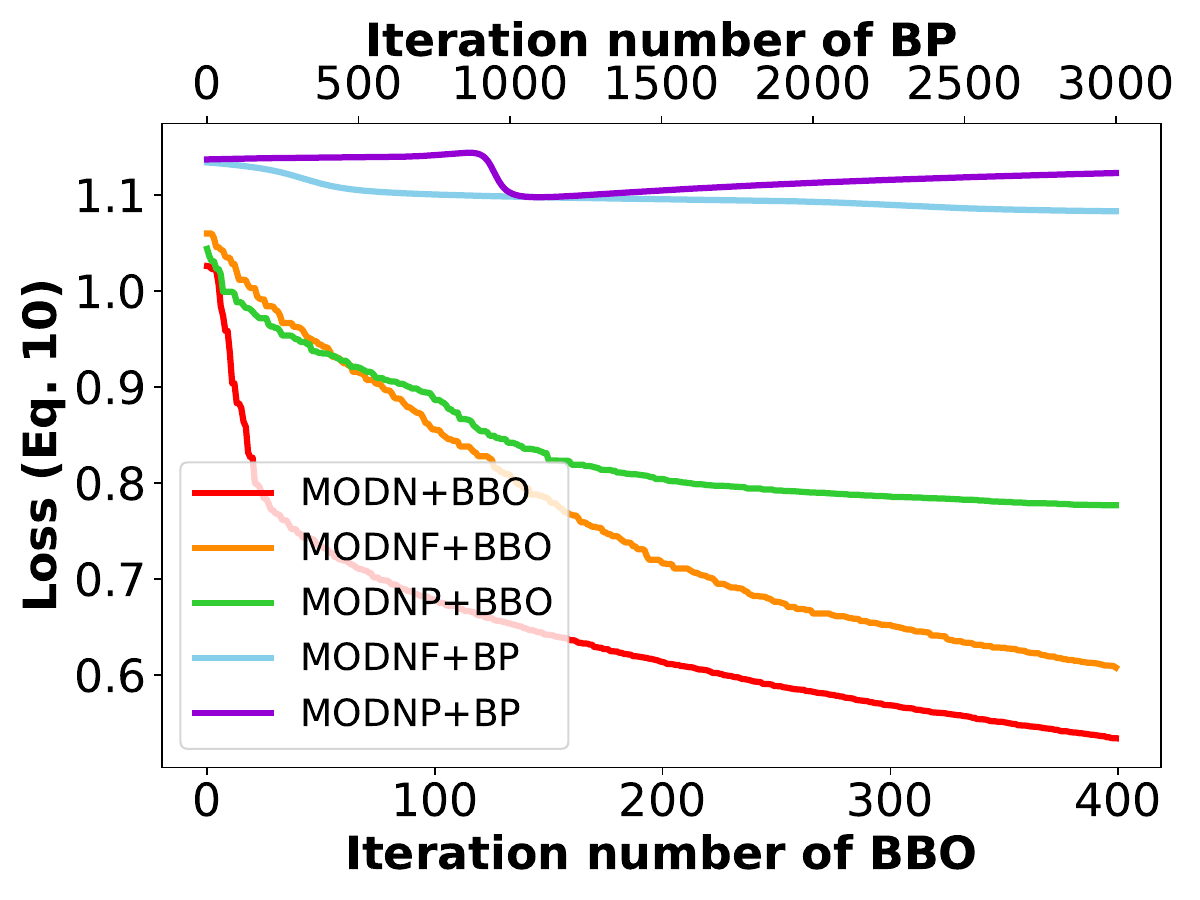}}\quad
\subfloat[Ecoli]{\label{fig5:k}\includegraphics[width=0.22\linewidth]{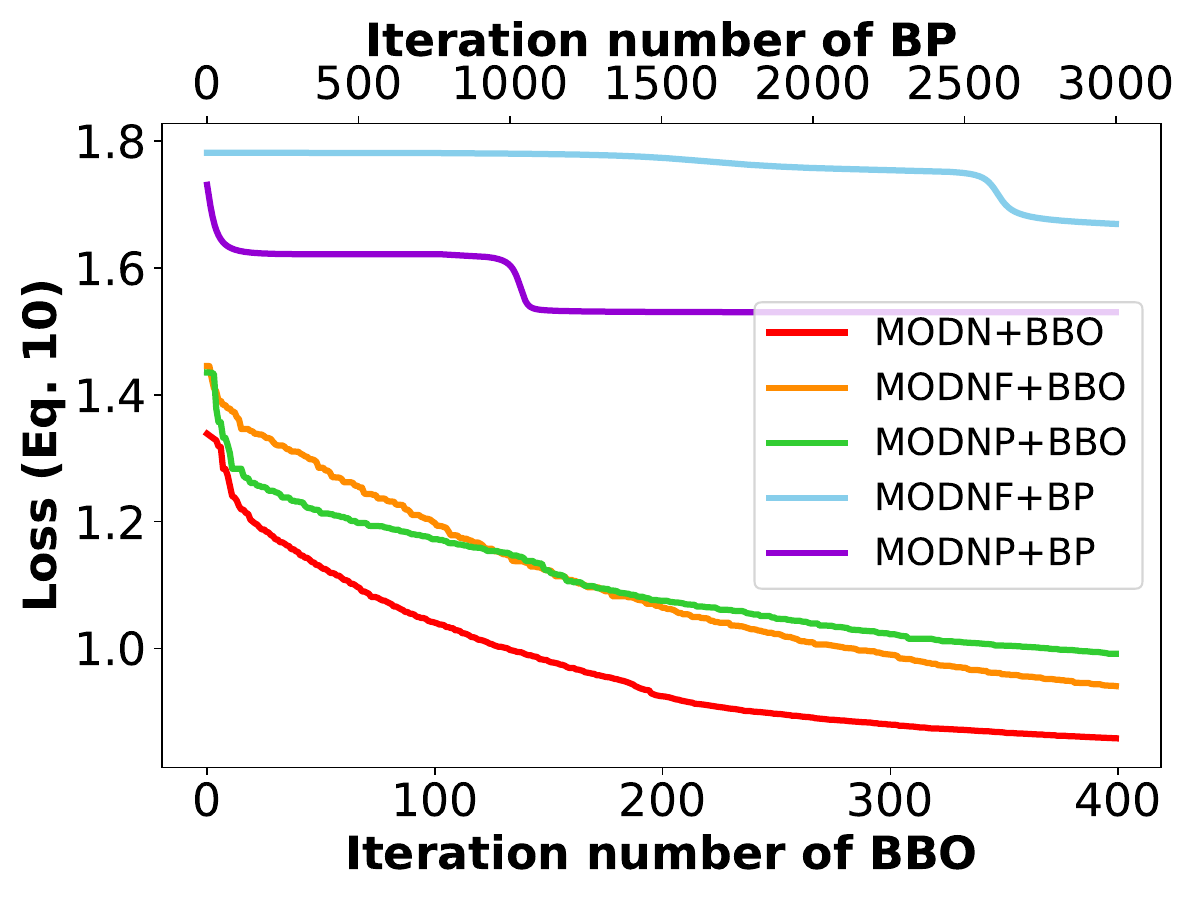}}\\
\caption{Curves of Eq. \ref{eq:eq10} with model-optimizer combinations on all datasets.}
\label{fig:5}
\end{figure*}

\begin{figure}
\centering
\subfloat[MODN \\ (ACC=0.8125)]{\label{fig6:a}\includegraphics[width=0.27\linewidth]{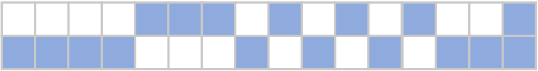}}\quad
\subfloat[MODNF \\ (ACC=0.7500)]{\label{fig6:b}\includegraphics[width=0.27\linewidth]{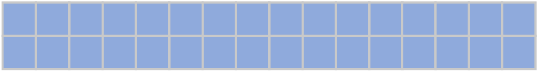}}\quad
\subfloat[MODNP \\ (ACC=0.4375)]{\label{fig6:c}\includegraphics[width=0.27\linewidth]{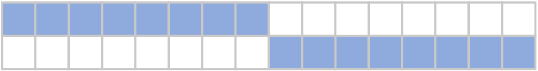}}\\

\subfloat[MODN \\ (ACC=0.9500)]{\label{fig6:d}\includegraphics[width=0.27\linewidth]{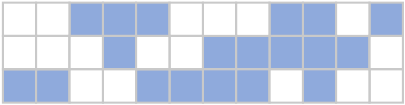}}\quad
\subfloat[MODNF \\ (ACC=0.9167)]{\label{fig6:e}\includegraphics[width=0.27\linewidth]{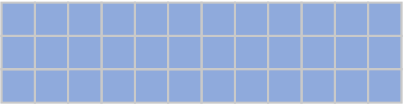}}\quad
\subfloat[MODNP \\ (ACC=0.9333)]{\label{fig6:f}\includegraphics[width=0.27\linewidth]{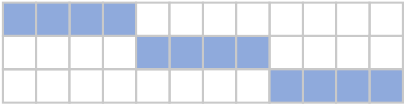}}\\

\subfloat[MODN (ACC=0.6744)]{\label{fig6:g}\includegraphics[width=0.9\linewidth]{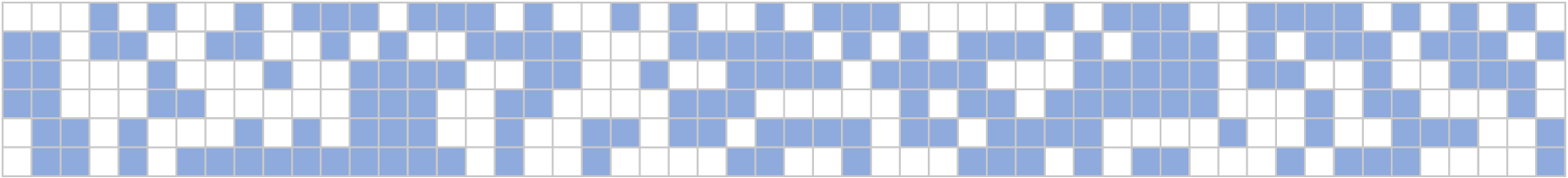}}\\
\subfloat[MODNF (ACC=0.5116)]{\label{fig6:h}\includegraphics[width=0.9\linewidth]{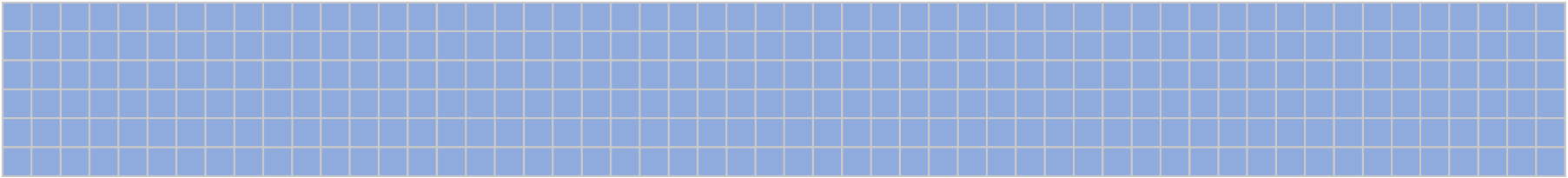}}\\
\subfloat[MODNP (ACC=0.3023)]{\label{fig6:i}\includegraphics[width=0.9\linewidth]{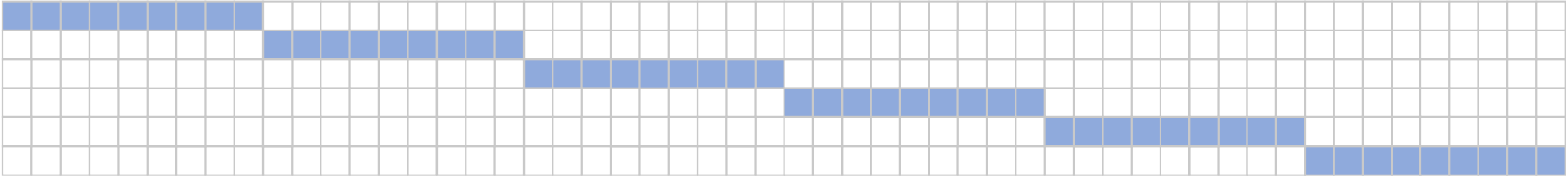}}\\
\caption{Visualization of the preference filter $\boldsymbol{P}$ (transposed) learned with all three models on binary/multi-class classification tasks. Blue and white boxes refer to the 1- and 0-valued elements in $\boldsymbol{P}$. (a)$\sim$(c), (d)$\sim$(f), and (g)$\sim$(i) represent the results obtained on Caesarian, Iris, and Glass datasets, respectively. }
\label{fig:6}
\end{figure}


\section{Conclusion}\label{con}
We have proposed a novel multi-output dendritic model (MODN) to provide a better solution for multi-output tasks. It works by introducing a preference filter into the soma layer, such that the beneficial dendrites can be adaptively selected to relate to each output. Moreover, MODN is a unified formulation that naturally involves a dendritic neuron model by customizing the filtering matrix. We also introduced a telodendron layer to show awareness of the real nervous architecture. To explore the optimization for MODN, we widely investigated the performance of seven heuristic algorithms and one gradient-based method and propose a 2-step training policy to specifically learn the filtering matrix. Experiments on 11 datasets for both binary and multi-out classification tasks generally demonstrated the effectiveness of our model in terms of accuracy, convergence, and generality. 

Because MODN is constructed with a single shallow neuron, it cannot effectively handle large-scale datasets with high dimensionality in the current form. However, simply deepening the MODN would also require a sufficiently powerful optimization strategy for training. In the future, we would like to devise a deep MODN with an appropriate optimizer to better address real-world tasks.


\bibliographystyle{IEEEtran}
\bibliography{reference}

\end{document}